\documentclass{article}

\PassOptionsToPackage{numbers, sort&compress}{natbib}
\usepackage[preprint]{neurips_2026}

\usepackage[utf8]{inputenc} 
\usepackage[T1]{fontenc}    
\usepackage{hyperref}       
\usepackage{url}            
\usepackage{booktabs}       
\usepackage{amsfonts}       
\usepackage{nicefrac}       
\usepackage{microtype}      
\usepackage{xcolor}  
\usepackage{amsmath}
\usepackage{amsthm}
\DeclareMathOperator*{\argmax}{arg\,max}
\usepackage{enumitem}
\usepackage{algorithm}
\usepackage{caption}
\usepackage{algpseudocode}
\usepackage{graphicx}
\usepackage{subcaption}
\usepackage{multirow}

\newtheorem{theorem}{Theorem}
\newtheorem{lemma}{Lemma}

\title{Valid Best-Model Identification for LLM Evaluation via Low-Rank Factorization}

\author{Elad Tolochinsky$^{1}$ ~~~~~ Yaniv Tenzer$^{1}$ ~~~~~ Yaniv Romano$^{1,2}$\\ \\
$^1$Department of Computer Science, Technion -- Israel Institute of Technology\\
$^2$Department of Electrical and Computer Engineering, Technion -- Israel Institute of Technology\\
\small{\texttt{elad.t@cs.technion.ac.il},\hspace{0.05cm} \texttt{yanivt@technion.ac.il},\hspace{0.05cm} \texttt{yromano@technion.ac.il}}
}

\begin{document}

\maketitle

\begin{abstract}

Selecting the best large language model (LLM) for a fixed benchmark is often expensive, since exhaustive evaluation requires running every model on every example. Multi-armed bandit (MAB) algorithms can reduce the number of LLM calls by sequentially selecting the next model–example pair to evaluate, thereby avoiding wasted evaluations on clearly underperforming models. Further savings can be achieved by predicting model scores from the partially observed model–example score matrix using low-rank factorization. However, such predictions are not ground truth: they can be biased and may therefore lead to incorrect identification of the best model. In this work, we propose a principled framework that combines MAB with cheap predicted scores without compromising statistical validity. Specifically, we derive doubly robust estimators of each model's performance that use the low-rank predictions to reduce variance. This enables the construction of valid finite-sample confidence intervals in our setting, where models are selected adaptively and examples are sampled without replacement. Empirical results on real-world benchmarks show that our approach reduces the number of required evaluations, yielding meaningful savings in compute and cost while accurately identifying the best-performing model.
\end{abstract}

\section{Introduction}
\label{sec:introduction}

LLMs are employed across a wide range of applications, including customer-facing
dialogue systems, code generation, writing assistance, and information retrieval
~\cite{bommasani2021opportunities,jiang2026survey,wei2022emergent}. As LLM adoption grows, practitioners often encounter the challenge of choosing the best-performing model and configuration---such as prompts and hyperparameters---for a specific task. A common practice for that purpose is to compare the performances of the models on fixed benchmarks using a score function. For example, consider the task of mathematical problem solving, as studied in benchmarks such as MMLU~\cite{wang2024mmlu} and MATH~\cite{hendrycksmath2021}. In this case, a score function can be a binary function that indicates whether the model’s solution is correct. Unfortunately, exhaustively evaluating every model on every example is resource-intensive: a single evaluation on the GAIA benchmark~\cite{ICLR2024_25ae35b5} can cost up to \$2,829~\cite{ndzomga2026efficient}, and the recent Holistic Agent Leaderboard evaluation spent roughly \$40,000 to compare 9 models across 9 benchmarks~\cite{kapoor2025hal}. Moreover, these are single-run figures: stochastic agents require repeated rollouts to produce reliable estimates, multiplying costs further.

To reduce the evaluation cost, recent works utilize multi-armed bandit (MAB) algorithms, such as UCB-E \cite{audibert2010best}, to adaptively allocate the limited evaluation budget. Ideally, MAB should spend more budget on promising candidates and less on clearly underperforming ones. In this framework, each model is an arm, and pulling an arm corresponds to evaluating the model on examples and observing the scores. The evaluation proceeds sequentially: at each step, the algorithm selects a model and a batch of examples that this model has not yet evaluated, computes their scores, and updates its estimate of the model's mean accuracy. These updated estimates are then used to guide the selection of the next model.

Further computational savings can be achieved by using cheap predicted scores rather than the exact ones. Such predictions can be found in the structure of the model-example score matrix, i.e., the matrix whose \((i,j)\) entry is the score obtained by running the \(i\)-th model on the \(j\)-th example. Concretely, the work in \cite{zhou2024speeding} utilizes the fact that models may exhibit similar performance profiles across examples, and examples may admit similar scores across models. This correlation suggests that the full score matrix may be well approximated by a low-rank matrix. During evaluation, this matrix is only partially observed. Yet, low-rank factorization provides a cheap way to impute the missing entries, producing predicted scores that can guide MAB's evaluation procedure. Indeed, it is shown in \cite{zhou2024speeding} that such predictions can accelerate the identification of the best model.

The challenge is that these predicted scores are not ground truth. They can be
biased and may therefore lead to incorrect identification of the best model.
This raises the need for a principled framework that incorporates predicted
scores while maintaining unbiased estimates of the models' mean performance.
To this end, we build on doubly robust estimators---including augmented inverse probability weighting (AIPW)~\cite{robins1995semiparametric,tsiatis2006semiparametric} and prediction-powered inference (PPI)~\cite{angelopoulos2023prediction, angelopoulos2023ppi++}---that enable
the construction of unbiased estimators of population parameters with
potentially lower variance by leveraging machine learning (ML) predictions. These tools have been developed for adaptive MAB, contextual bandits, and even
best-arm identification~\citep{hadad2021confidence,dudik2011doubly,
wang2017optimal,dimakopoulou2021online}.
However, it is unclear how to extend the existing guarantees for these techniques to the finite-population setting considered in this paper. Beyond sampling examples
without replacement from a fixed benchmark, our setting introduces an additional complication: the
imperfect predictions are obtained by repeatedly refitting an online low-rank
model to the same adaptively observed model--example score matrix, resulting in strong cross-arm dependence.

In this work, we bridge the gap by presenting PULSE: Prediction-Powered Unbiased Low-Rank Sequential Evaluation. Given a fixed budget of LLM calls, PULSE leverages imperfect predicted scores to increase the probability of identifying the best-performing model. Importantly, our approach remains statistically valid even when the predictions are inaccurate. As we show, when predictions are informative, our method accelerates identification; when they are not, it can approach standard MAB behavior that relies solely on observed evaluations.

\begin{minipage}[c]{0.53\textwidth}
Our main contributions are as follows:
\begin{itemize}[leftmargin=*]
    \item Our first key contribution (Section~\ref{sec: the method}) is an
    estimator of each model's mean accuracy that leverages predictions obtained
    by low-rank factorization of the score matrix. The resulting estimator is
    guaranteed to be unbiased, while informative predictions can reduce its
    variance. We also show how to construct confidence intervals (CIs) for the
    models' mean accuracies and establish their finite-sample validity.

    \item Building on our theoretical results, our second key contribution
    (Section~\ref{sec: the method}) is a MAB algorithm for best-model
    identification. We establish a finite-sample success guarantee for PULSE and analyze how prediction quality can affect its convergence.

    \item Finally, we evaluate the performance of our algorithm on 2.2K models
    and six real-world benchmarks. As shown in
    Figure~\ref{fig:budget_to_95_bars}, our experiments demonstrate that PULSE
    can save up to $46\%$ of LLM calls relative to UCB-E when targeting $95\%$
    identification accuracy. Code: \url{https://github.com/elad-tolo/pulse-llm-eval}.
\end{itemize}
\end{minipage}\hfill
\begin{minipage}[c]{0.43\textwidth}
\centering
\captionsetup{type=figure}
\includegraphics[width=\linewidth]{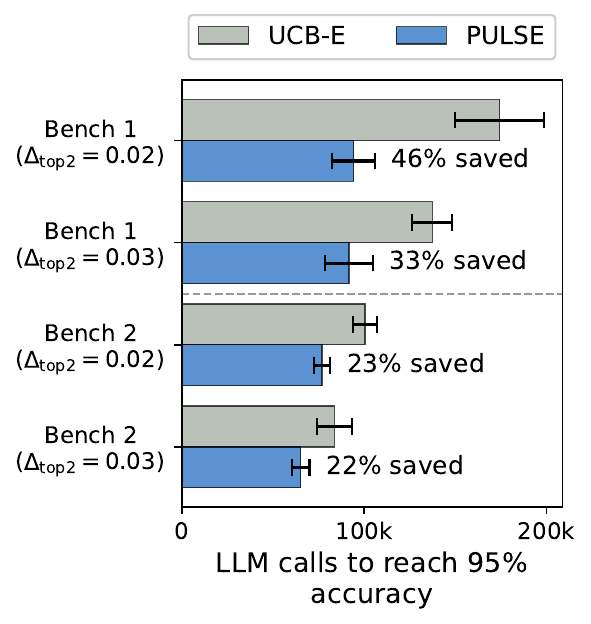}
\captionof{figure}{Budget needed for 95\% best-model identification accuracy (lower is better). Bench 1 and 2 cover 2.2K models each. $\Delta_{\text{top2}}$, the gap between the top two models, controls difficulty. See Section~\ref{sec:experiments}.}
\label{fig:budget_to_95_bars}
\end{minipage}

\section{Problem Formulation and Related Work}
We let $\mathcal{X} = \{x_1,\ldots,x_n\}$ be a fixed dataset of examples, $\mathcal{M} = \{f_1,\ldots,f_m\}$ be a set of LLMs. Let $s(M(x)) \in \{0,1\}$ be a scoring function used to score the output of the LLMs. Without loss of generality, we assume that
$s(f(x)) > s(f'(x))$ indicates that model \(f\) has better performance than model \(f'\) on example \(x\). We denote by $S \in \{0,1\}^{m \times n}$ the score matrix so that $S_{ij} := s(f_i(x_j))$. In our setting, $S$ is only partially observed, and we think of its entries as random variables. More precisely, let $S_i$ be the score obtained by model $i$ when tested on an example that is drawn uniformly from $\mathcal{X}$. Under this formulation, $\mu_i = \mathbb{E}[S_i] = \frac{1}{n}\sum_{j =1}^{n}S_{i,j}$, where the expectation is taken with respect to the random choice of the example. In contrast with an infinite population setting, in which the sample average is only an estimate for the population mean, here the sample average is the actual expectation of $S^i$.

Given a budget of \(T\) LLM calls, we consider an adaptive algorithm
\(\mathcal{A}(T, \mathcal{M}, \mathcal{X})\) that returns an index
\(\hat i\in[m]\), where \([m] = \{1, \dots, m\}\). Our goal is to maximize the probability of correctly identifying the best model \(i^* = \arg\max_{i\in[m]} \mu_i\), which we assume is unique. We denote this probability by \(P(\hat i=i^*)\), taken over the randomness of the adaptive sampling procedure. Our algorithm follows the UCB-E method~\cite{audibert2010best}. At each round \(t\), the algorithm maintains CIs for the mean accuracy of each model and selects the model with the largest upper confidence bound. It then evaluates the selected model on a batch of previously unseen examples. Using the newly observed scores, the algorithm updates the estimate of the selected model’s mean accuracy and its corresponding CI.

The convergence rate of the UCB-E algorithm to the optimal arm (best model) depends on the width of the CIs~\cite{audibert2010best}, which decreases with the number of observations. This motivates us to artificially increase the number of samples by predicting the unobserved entries of $S$ using a low-rank factorization that leverages the correlation between the models. 

What makes UCB-E nonstandard in our finite-dataset setting is that the quantities used to construct the confidence bounds are both adaptive and non-stationary. The predicted scores are obtained by repeatedly applying a low-rank factorization to the partially observed score matrix, and hence depend on the entire history of previous evaluations. At the same time, the observed scores are also not i.i.d as they are sampled without replacement.

\subsection*{Related Work}

There is a rich line of research on the use of doubly robust estimators in the context of MAB~\cite{robins1995semiparametric,tsiatis2006semiparametric,dudik2011doubly,wang2017optimal,hadad2021confidence,kato2021adaptive,bibaut2021post,zhan2021off,dimakopoulou2021online,kim2019doubly,kim2021doubly}. Here, we discuss the works that are most closely related to ours. In \cite{ji2025multi}, the authors show how surrogate rewards can improve the sample efficiency of MAB by using PPI. However, they assume that the rewards' distribution of each arm is Gaussian and stationary. In contrast, we assume that the rewards population is finite (i.e., the entries of $S$), and changes with each step, as we sample scores from $S$ without replacement. 

Another closely related work to ours is  \cite{ao2026best}, which studies the problem of best-arm identification while using LLM judges and limited human audits. In contrast to our setting, this work assumes a stream of examples and scores that follows a stationary data-generating distribution, whereas we assume access to a finite number of examples, i.e., the complete dataset. Since we sample example-score pairs without replacement, the score distribution changes over time.

Finally, the FAQ method  \cite{wu2026efficient} considers the problem of efficiently evaluating the accuracy of a single candidate model using low-rank factorization. While the unbiased mean accuracy estimator used in our work follows the structure of the one from \cite{wu2026efficient}, we differ with respect to three key aspects. First, the FAQ algorithm samples examples with replacement, while we focus on sampling without replacement. Second, we construct CIs using finite-sample concentration inequalities, while the CIs used in FAQ are only asymptotically valid. Finally, we focus on the MAB setup, where the key challenge is to efficiently identify the best model, whereas  \cite{wu2026efficient} focuses on evaluating a single model. 

\section{Method}
\label{sec: the method}
\begin{algorithm}[t]
\caption{PULSE: Prediction-Powered Unbiased Low-Rank Sequential Evaluation}
\label{alg:ucb-e-pp}
\begin{algorithmic}[1]
\Require Test models $\mathcal{M}=\{f_1,\dots,f_m\}$, examples $\mathcal{X}=\{x_1,\dots,x_n\}$, scoring function $s(\cdot)\in\{0,1\}$;
evaluation budget $T$, batch size $B$, exploration parameter $a$;
low-rank model $\mathcal{M}$ with rank $r$, retrain period $\tau$;
init pulls per arm $T_0$;
side-information score matrix $S^{\text{train}}$
\Ensure Prediction $\hat{i}^*$ for the best model $i^*$.
\Statex \textbf{Stage 1: Initialization}
\State Fit $(U_{\text{train}}, V) \gets \mathcal{M}(S^{\text{train}}; r)$ using Step~1; freeze $V$ and discard $U_{\text{train}}$.
\For{$i = 1, \dots, m$}
    \State Sample $J_i \subset [n]$ with $|J_i|=B\times T_0$ and set $S_{i,j}\gets s(f_i(x_j))$ and $O_{i,j}\gets 1$  $\forall j\in J_i$.
\EndFor
\State $b \gets T - mBT_0$
\State Refit $U$ on the observed test entries with $V$ frozen; set $\hat{S}^0 \gets \sigma(U^TV)$.
\State For each $i$: initialize $\hat\mu_i$ and set $t,\hat\lambda^0_i \gets 0$.
\Statex \textbf{Stage~2: bandit loop}
\For{$t = 1, \dots, b/B$}
    \State $i \gets \argmax_{i:|O_i|<n} \hat{\mu}_i + \sqrt{a / |O_i|}$.
     \If{$t \bmod \tau = 0$} 
        \State Refit $U$ on the current observation mask $O$ with $V$ frozen;
        set $\hat{S}^t \gets \sigma(U^T V)$
    \EndIf
    \State compute $\hat{\lambda}^t_{i_t}$ using Step~3.
    \State sample $B_t \subset U^t_i$ with $|B_t| = B$
    \State $S_{i,j} \gets s(f_{i}(x_j))$ and $O_{i,j}\gets 1$ for every $j\in B_t$.
    \State Update $\hat{\mu}^t_i$
\EndFor
\State \Return $\hat{i}^* \gets \argmax_i \hat{\mu}_i$.
\end{algorithmic}
\end{algorithm}

In this section, we present our sequential approach for identifying the best-performing model while safely utilizing predicted scores to reduce the number of LLM calls. Our sequential method alternates between two key steps: (i) predicting the missing entries of the score matrix $S$ via low-rank factorization that utilizes the data evaluated thus far, and (ii) an adaptive MAB step that selects a model–example pair to evaluate, followed by updating the estimates of the models' mean accuracies $\mu_i$ and their corresponding CIs. For ease of exposition, we describe the method in the setting where a single example is sampled at each step. Later, we describe how to extend it to batch sampling.

\subsection*{Step 1: Low-Rank Factorization} 

Given a binary matrix $S \in \{0,1\}^{m \times n}$ and an observation
mask $O \in \{0,1\}^{m \times n}$ (with observed index set
$\Omega = \{(i,j) : O_{ij}=1\}$), we model the probability that model $i \in [m]$ answers correctly a question $j\in [n]$ by
\[
  \hat{S}_{i,j} =
  \sigma\left(u_i^{\top} v_j\right),
  \qquad \sigma(z)=\frac{1}{1+e^{-z}},
\]
where $u_i, v_j \in \mathbb{R}^r, \hspace{0.05in} 1\leq i \leq m, \hspace{0.05in} 1\leq j \leq n$ are learned low-dimensional latent representations of model $i$ and question $j$, respectively. The inner product $u_i^\top v_j$ is used to predict the log-odds that model $i$ answers question $j$ correctly.

We learn $u_i$ and $v_j$ by optimizing the following objective:
\begin{equation} 
\label{eq:low-rank-cost}
  \mathcal{L}(U,V) =
  \frac{1}{|\Omega|}
  \sum_{(i,j)\in\Omega}
  \mathrm{BCE}\left(S_{ij}, \sigma(u_i^{\top} v_j)\right)
  +
  \frac{\lambda}{2(m+n)}
  \bigl(\|U\|_F^{2} + \|V\|_F^{2}\bigr),
\end{equation}
where, for a label $y\in\{0,1\}$ and predicted probability $p\in(0,1)$, $\mathrm{BCE}(y,p)=-y\log p-(1-y)\log(1-p).$ Equivalently, this is the negative log-likelihood of a Bernoulli model in which
$\mathbb{P}(S_{ij}=1 \mid u_i,v_j)=\sigma(u_i^\top v_j).$
Thus, the first term in \eqref{eq:low-rank-cost} fits the observed entries of the score matrix using a low-rank logistic model. The regularization term prevents overfitting to the observed entries and promotes generalization to the unobserved entries, which is crucial in our setting since only a fraction of the score matrix is observed.
The parameter $\lambda$ controls the trade-off between fitting the observed scores and regularizing the latent factors.

\subsection*{Step 2: Estimating mean accuracies and constructing CIs}
We first fix a model $i\in[m]$ and show how to construct, at each time step $t$, a conditionally unbiased estimator of $\mu_i$ using the data collected during the sequential procedure. We begin by introducing the required notation. Let $O_i^{t-1}\subset[n]$ denote the set of indices of examples evaluated by model $i$ by the end of step $t-1$. Let $U_i^t=[n]\setminus O_i^{t-1}$ denote the set of indices of examples that model $i$ has not evaluated before step $t$. We denote by $\hat{S}_{i,j}^t$ the predicted score of model $i$ on example $j$ at step $t$. The superscript $t$ emphasizes that the low-rank factorization is updated over time, so the predicted scores may also change from one step to the next. Finally, since the next example is sampled uniformly from $U_i^t$,  we write $\pi^t_i = {1}/{|U^t_i|}$ for the probability of selecting any particular index $j \in U^t_i$.

To state the estimator and its unbiasedness property precisely, we also need to specify what information is available before the new example is sampled. This is captured by the natural filtration. Formally, let $\{\mathcal{F}_t\}_{t \ge 0}$ denote the filtration, where $\mathcal{F}_t$ is the sigma-algebra generated by all observations collected up to step $t$. A random variable $X$ is said to be $\mathcal{F}_t$-measurable if it is completely determined by the data available up to step $t$.

Now suppose that, at step $t$, model $i$ is selected and an index $j_t\in U_i^t$ is sampled uniformly. We define the following estimator for $\mu_i$:
\begin{equation}   
 \hat{\theta}_{i}^t =  \frac{1}{n} \Big[\sum_{j\in O^{t-1}_i} S_{i,j} + \sum_{j \in U_i^{t}} \lambda_{i}^{t}\hat{S}_{i,j}^{t}  + \underbrace{\frac{S_{i, j_t} - \lambda_{i}^{t}\hat{S}_{i,j_t}^{t}}{\pi^{t}_i}}_{Z_{i}^t} \Big]. 
\end{equation}
The first term sums the scores alow-rankeady observed for model $i$ before step $t$. The second term uses the predicted scores for the entries not yet observed for model $i$, scaled by a weight $\lambda^t_i \in [0,1]$ that can be tuned using past data, as discussed below. The final term, denoted by $Z_{i}^t$, is an inverse-probability residual correction that removes the bias introduced by using potentially inaccurate predicted scores for the unobserved entries. As the next lemma shows, this correction makes $\hat{\theta}^t_i$ a conditionally unbiased estimator for $\mu_i$.
\begin{lemma}
\label{lem:unbiased_estimator}
Assume $\pi^t_i, \lambda^t_i$ and $\hat{S}^t_{ij}$ are $\mathcal{F}_{t-1}$ measurable, then for each $t \ge 1$, $\mathbb{E}[\hat{\theta}_{i}^t \mid \mathcal{F}_{t-1}] = \mu_i$
\end{lemma}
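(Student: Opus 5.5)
The plan is to split $\hat{\theta}_i^t$ into a part that is $\mathcal{F}_{t-1}$-measurable and the single random residual $Z_i^t$, and then compute $\mathbb{E}[Z_i^t\mid\mathcal{F}_{t-1}]$ directly from the uniform sampling of $j_t$.

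First, we note which quantities are fixed given $\mathcal{F}_{t-1}$. The set $O_i^{t-1}$ and its complement $U_i^t$ are determined by the history up to step $t-1$. Hence $\sum_{j\in O_i^{t-1}} S_{i,j}$ is fixed: these entries have already been observed, and the remaining entries of $S$ are deterministic constants of the fixed finite population. By hypothesis, $\lambda_i^t$, $\pi_i^t$ and $\hat S^t_{i,j}$ are $\mathcal{F}_{t-1}$-measurable, so $\sum_{j\in U_i^t}\lambda_i^t\hat S^t_{i,j}$ is fixed as well. Taking conditional expectations therefore leaves only $\mathbb{E}[Z_i^t\mid\mathcal{F}_{t-1}]$ to compute.

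Second, we use that $j_t$ is drawn uniformly from $U_i^t$ given $\mathcal{F}_{t-1}$, so each $j\in U_i^t$ is selected with probability $\pi_i^t$. This gives
\[
\mathbb{E}[Z_i^t\mid\mathcal{F}_{t-1}] = \sum_{j\in U_i^t}\pi_i^t\,\frac{S_{i,j}-\lambda_i^t\hat S^t_{i,j}}{\pi_i^t} = \sum_{j\in U_i^t}S_{i,j}-\sum_{j\in U_i^t}\lambda_i^t\hat S^t_{i,j}.
\]
Substituting back, the prediction sums cancel. What remains is $\frac1n\bigl[\sum_{j\in O_i^{t-1}}S_{i,j}+\sum_{j\in U_i^t}S_{i,j}\bigr]=\frac1n\sum_{j=1}^n S_{i,j}=\mu_i$, because $O_i^{t-1}$ and $U_i^t$ partition $[n]$.

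The only delicate point is the measurability bookkeeping. The argument needs $S_{i,j}$ for $j\in U_i^t$ to be treated as fixed unknown constants rather than random variables, and it needs the sampling distribution of $j_t$ to be exactly uniform on $U_i^t$ conditionally on $\mathcal{F}_{t-1}$. This holds in the finite-population formulation, where the only randomness is in the sampling. It also requires $U_i^t$ to be nonempty, which is guaranteed because the algorithm only selects arms with $|O_i|<n$.
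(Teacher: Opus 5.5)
Your proposal is correct and matches the paper's proof step for step. You pull the $\mathcal{F}_{t-1}$-measurable sums out of the conditional expectation, then evaluate $\mathbb{E}[Z_i^t\mid\mathcal{F}_{t-1}]$ by averaging over the uniform draw $j_t\in U_i^t$, so the $\pi_i^t$ factors and the prediction sums cancel, and you conclude from $O_i^{t-1}\cup U_i^t=[n]$. Your explicit remarks that the unobserved $S_{i,j}$ are fixed population constants, that $j_t$ is conditionally uniform on $U_i^t$, and that $U_i^t\neq\emptyset$ spell out assumptions the paper uses only implicitly.
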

Importantly, our algorithm satisfies the condition of Lemma~\ref{lem:unbiased_estimator} by construction. Indeed, at the beginning of step $t$, both $\pi_i^t$ and $\hat{S}^t_{ij}$ are fully determined by the data available up to the end of step $t-1$. Hence, both quantities are $\mathcal{F}_{t-1}$-measurable. Similarly, as we discuss below, the weight $\lambda^t_i$ is chosen using only past data, before observing the new score at step $t$. Therefore, $\lambda^t_i$ is also $\mathcal{F}_{t-1}$-measurable. 

To better understand the role of $\lambda_i^t$, consider the ideal case in which the predictions are exact, i.e., $\hat S_{ij}^t=S_{ij}$ for all $j\in U_i^t$. Then, setting $\lambda^t_i = 1$ yields the perfect estimator $\hat{\theta}_{i}^t \equiv \mu_i$.\footnote{For $\lambda^t_i=1$ and $S_{i,j} = \hat{S}^t_{i,j}$ we have that $Z^t_i=0$ and  $\hat{\theta}_{i}^t =  \frac{1}{n} \left[\sum_{j\in O^{t-1}_i} S_{i,j} + \sum_{j \in U_i^{t}} S_{i,j} \right] = \frac{1}{n}\sum_j S_{i,j}=\mu_i$.} Since $\mu_i$ is a finite-population mean, this perfect estimator has zero variance. At the other extreme, if the predictions are uninformative, the estimator remains conditionally unbiased by Lemma~\ref{lem:unbiased_estimator}, but using such poor predictions can increase variance. In this case, it is better to discard the predictions by setting $\lambda_i^t=0$. In Section~\ref{subsec:fine_tuning_lambda}, we formalize this intuition by showing that the variance of the estimator is controlled by the variance of the residual correction $Z_i^t$, which can be reduced by tuning $\lambda_i^t$.

The preceding construction gives a conditionally unbiased estimator $\hat\theta_i^t$ whenever model $i$ is selected at step $t$. To aggregate the information collected across multiple selections of the same model, we average these one-step estimators. Formally, let $A_i(t)$ be the set of steps in which the $i$-th model was selected. For example, if by time $t=10$ model $i=5$ was selected at steps $s=1$ and $s=7$, then $A_5(10) = \{1, 7\}$. We define the running estimator of $\mu_i$ at step $t$ as follows:
\begin{equation}    
\hat\mu^t_i := \frac{1}{|A_i(t)|}\sum_{s\in A_i(t)} \hat\theta^s_i.
\end{equation}

We next show how the estimator $\hat{\mu}_{i}^t$ can be used to construct a valid confidence interval for $\mu_i$. A key complication is that, even for a fixed model $i$, the sampled scores are not i.i.d. This is because examples are sampled without replacement: once model $i$ is evaluated on example $j$, this index is removed from the future sets $U_i^t$. Moreover, the predicted scores $\hat S_{ij}^t$ may change over time because the low-rank factorization is updated during the sequential procedure. As a result, the one-step estimators $\{\hat\theta_i^s:s\in A_i(t)\}$ are neither independent nor identically distributed. We therefore use martingale theory to derive a concentration bound on the estimation error.
\begin{theorem}
\label{thm:concentration_ineq}
        Fix model $i$ and step $t$, and assume that $|A_i(t)| = K > 0$. Define $V_i = \frac{1}{|A_i(t)|}\sum_{s \in A_i(t) } \text{Var}(Z_i^s | \mathcal{F}_{s-1})$. Assume that there exist a constant $v>0$ such that $V_i \le v$. Then, under the assumptions of Lemma~\ref{lem:unbiased_estimator},  for every $\epsilon > 0$,
     $$
   P\bigl(|\hat\mu_i^t - \mu_i| \ge \epsilon,\ |A_i(t)| = K\bigr)
\;\le\;
2\exp\!\left(-\frac{\epsilon^2}{2\bigl(\frac{v}{Kn^2} + \frac{2\epsilon}{3K}\bigr)}\right).
    $$
\end{theorem}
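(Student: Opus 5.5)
The plan is to write $\hat\mu_i^t - \mu_i$ as a normalized martingale and then apply Freedman's inequality, which is the Bernstein-type bound for martingales. For each step $s$, set $D_s := \mathbf{1}\{s\in A_i(t)\}\,(\hat\theta_i^s - \mu_i)$. Whether model $i$ is selected at step $s$ is decided by the UCB rule from information available before step $s$, so this indicator is $\mathcal{F}_{s-1}$-measurable. Lemma~\ref{lem:unbiased_estimator} then gives $\mathbb{E}[D_s\mid\mathcal{F}_{s-1}]=0$, so $M_t=\sum_{s\le t} D_s$ is a martingale with respect to $\{\mathcal{F}_s\}$. On the event $\{|A_i(t)|=K\}$ we have $\hat\mu_i^t-\mu_i = M_t/K$. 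The target event is therefore $\{|M_t|\ge K\epsilon,\ |A_i(t)|=K\}$.

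Next I will bound the increments and their conditional variances. The quantities $\sum_{j\in O_i^{s-1}}S_{ij}$, $\sum_{j\in U_i^s}\lambda_i^s\hat S_{ij}^s$ and $\pi_i^s$ are all $\mathcal{F}_{s-1}$-measurable, so the only random part of $\hat\theta_i^s$ given $\mathcal{F}_{s-1}$ is $Z_i^s/n$. Hence $D_s = \mathbf{1}\{s\in A_i(t)\}\,(Z_i^s-\mathbb{E}[Z_i^s\mid\mathcal{F}_{s-1}])/n$, and
\[
\mathrm{Var}(D_s\mid\mathcal{F}_{s-1})=\mathbf{1}\{s\in A_i(t)\}\,\mathrm{Var}(Z_i^s\mid\mathcal{F}_{s-1})/n^2 .
\]
The predictable quadratic variation is then $W_t=\sum_s \mathrm{Var}(D_s\mid\mathcal{F}_{s-1}) = K V_i/n^2 \le Kv/n^2$ on the event of interest.

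For the range bound, $Z_i^s/n = |U_i^s|(S_{ij_s}-\lambda_i^s\hat S_{ij_s}^s)/n$ lies in $[-1,1]$, because $|U_i^s|\le n$ and both $S$ and $\lambda\hat S$ lie in $[0,1]$. Centering $Z_i^s/n$ at its conditional mean keeps $|D_s|\le 1$ only if the deviation stays within the same interval. To secure this, I would use that $\hat\theta_i^s\in[\,\cdot\,]$ is a convex-combination-like quantity, or more simply note that the conditional mean of $Z_i^s/n$ is $\frac1n\sum_{j\in U_i^s}(S_{ij}-\lambda\hat S_{ij})$, which also lies in $[-1,1]$. The cleanest choice is a one-sided bound $D_s\le 1$ for each tail separately. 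I expect this to be the main obstacle: obtaining the constant $1$ (and hence the $2\epsilon/(3K)$ term) rather than $2$. I would handle it by applying Freedman to $D_s$ and $-D_s$ separately and checking the one-sided bound $D_s\le 1$ carefully.

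Freedman's inequality states that if $D_s\le R$ then $P(M_t\ge x,\ W_t\le \sigma^2)\le \exp\!\bigl(-x^2/(2(\sigma^2+Rx/3))\bigr)$. I apply it with $x=K\epsilon$, $\sigma^2=Kv/n^2$ and $R=1$. Because $\{|A_i(t)|=K\}\subseteq\{W_t\le Kv/n^2\}$, this gives
\[
\exp\!\left(-\frac{K^2\epsilon^2}{2(Kv/n^2+K\epsilon/3)}\right)=\exp\!\left(-\frac{\epsilon^2}{2\bigl(\frac{v}{Kn^2}+\frac{\epsilon}{3K}\bigr)}\right).
\]
If the one-sided bound only yields $R=2$, the denominator term becomes $\frac{2\epsilon}{3K}$, which matches the stated form exactly. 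So I will simply use $R=2$, obtained from $|D_s|\le|Z/n|+|\mathbb{E}[Z/n\mid\mathcal{F}_{s-1}]|\le 2$. Summing the two tails gives the factor $2$, which completes the bound.
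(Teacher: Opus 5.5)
Your proposal is correct and is essentially the paper's proof. You pad the one-step errors with the $\mathcal{F}_{s-1}$-measurable selection indicator, observe that only $Z_i^s/n$ is random given $\mathcal{F}_{s-1}$, bound the increments by $2$ and the predictable quadratic variation by $Kv/n^2$ on $\{|A_i(t)|=K\}$, and apply Freedman with $x=K\epsilon$.

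Your tentative $R=1$ refinement would not hold in general: with $|U_i^s|=n$, one residual $S_{ij}-\lambda_i^s\hat S^s_{ij}$ near $1$ and the rest near $-1$, the centered increment approaches $2$. Settling on $R=2$, as the paper does, is therefore the right call.
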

The theorem above clarifies how our CIs are \emph{prediction-powered}. Specifically, the bound depends on the conditional variance of the residual
correction $Z_i^t$, rather than directly on the conditional variance of the
observed score $S_{i,j_t}$. As we show next, the upper bound $v$ depends on both (i) how informative the predicted scores are about the true scores, and (ii) the choice of the weighting parameter $\lambda_i^t$.

\subsection*{Step 3: Tuning $\lambda^t_i$}
\label{subsec:fine_tuning_lambda}
We now study the effect of $\lambda_i^t$ on the estimator's variance and
derive a principled tuning approach. We begin by comparing our estimator to the
 observation-only estimator:
\begin{equation}
\label{eq:obs_only_estimator}
\hat{\mu}^{\text{obs}} = \frac{1}{|A_i(t)|} \sum_{s \in A_i(t)} S_{i,j_s}.
\end{equation}
Finite-sample confidence intervals for the observation-only estimator
can be constructed using the Bernstein--Serfling
inequality~\cite{bardenet2015concentration}. The width of the CI
for $\hat\mu_i^t$ is driven by the conditional variance appearing in
Theorem~\ref{thm:concentration_ineq}, while the width of the CI
for $\hat\mu^{\text{obs}}$ is driven by the finite-population (unconditional)
variance $\operatorname{Var}(S_i)$; we therefore compare these two variances
directly. The following lemma shows that an appropriate choice of
$\lambda_i^t$ reduces the conditional variance relative to the
observation-only baseline.
\begin{lemma}
\label{optimal_lambda}
    
Fix a model \(i\). Suppose that step \(t\) is the \(K\)-th selection of model
\(i\), so that \(|A_i(t)|=K\). The conditional variance of the residual correction $Z_i^t$ is minimized by choosing
\begin{equation}
(\lambda_i^t)^*
=
\frac{
\operatorname{Cov}\!\left(
S_{i,j_t},\,
\hat S^t_{i,j_t}
\,\middle|\, \mathcal F_{t-1}
\right)}
{
\operatorname{Var}\!\left(
\hat S^t_{i,j_t}
\,\middle|\, \mathcal F_{t-1}
\right)
}.
\label{eq:alpha-star-var-classical}
\end{equation}
At this choice of $(\lambda_i^t)^*$,
$$
 \text{Var}(\hat\mu^t_i | \mathcal{F}_{t-1}) \le (1- \rho^2_t)\text{Var}(\hat{\mu}^{\text{obs}}),
$$
where $
\rho_t = \operatorname{Corr}\!\left(S_{i,j_t},\,\hat S^t_{i,j_t}\,\middle|\, \mathcal F_{t-1}\right)$ is the conditional correlation between the true and predicted scores over the currently unobserved examples.
\end{lemma}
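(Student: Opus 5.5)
The plan is to reduce everything to a one-dimensional control-variate computation conditional on $\mathcal F_{t-1}$. Conditional on $\mathcal F_{t-1}$, the quantities $O_i^{t-1}$, $U_i^t$, $\pi_i^t$, $\hat S^t_{i,\cdot}$ and $\lambda_i^t$ are all fixed. The index $j_t$ is uniform on $U_i^t$. Hence the only randomness in $\hat\theta_i^t$ is through $Z_i^t=(S_{i,j_t}-\lambda\hat S^t_{i,j_t})/\pi_i^t$, and
\[
\operatorname{Var}(\hat\theta_i^t\mid\mathcal F_{t-1})=\frac{1}{n^2\pi_t^2}\operatorname{Var}\bigl(S_{i,j_t}-\lambda\hat S^t_{i,j_t}\,\big|\,\mathcal F_{t-1}\bigr),
\]
where $\pi_t:=\pi_i^t$. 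Expanding gives a quadratic in $\lambda$:
\[
\operatorname{Var}(S\mid\cdot)-2\lambda\operatorname{Cov}(S,\hat S\mid\cdot)+\lambda^2\operatorname{Var}(\hat S\mid\cdot).
\]
Setting the derivative to zero yields the stated $(\lambda_i^t)^*$, and since the quadratic is convex it is a minimizer. I would note that $\lambda^*$ is indeed $\mathcal F_{t-1}$-measurable, as it is a deterministic functional of the unobserved population $\{(S_{ij},\hat S^t_{ij}):j\in U_i^t\}$. In practice it must be estimated from past data, but that does not matter for this lemma. If $\operatorname{Var}(\hat S\mid\cdot)=0$, I would use the convention $\lambda^*=0$.

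Plugging $\lambda^*$ back in gives the standard identity
\[
\min_\lambda\operatorname{Var}(S-\lambda\hat S\mid\cdot)=(1-\rho_t^2)\operatorname{Var}(S_{i,j_t}\mid\mathcal F_{t-1}),
\]
so that
\[
\operatorname{Var}(\hat\theta_i^t\mid\mathcal F_{t-1})=\frac{|U_i^t|^2}{n^2}(1-\rho_t^2)\,\sigma^2_{U},
\]
where $\sigma^2_U$ is the finite-population variance of $S_{i,\cdot}$ restricted to $U_i^t$. I then pass from $\hat\theta_i^t$ to $\hat\mu_i^t$. The earlier terms $\hat\theta_i^s$, $s<t$, in the average are $\mathcal F_{t-1}$-measurable, so they contribute no conditional variance. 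This gives
\[
\operatorname{Var}(\hat\mu_i^t\mid\mathcal F_{t-1})=K^{-2}\operatorname{Var}(\hat\theta_i^t\mid\mathcal F_{t-1}).
\]

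The remaining step, which I expect to be the main obstacle, is to compare this with $\operatorname{Var}(\hat\mu^{\text{obs}})$. That quantity is the without-replacement sample-mean variance
\[
\operatorname{Var}(\hat\mu^{\text{obs}})=\frac{\sigma^2}{K}\cdot\frac{n-K}{n-1},
\]
with $\sigma^2=\operatorname{Var}(S_i)$ the full-population variance. The factor $|U_i^t|/n\le 1$ helps. However, $\sigma_U^2$ is the variance on the remaining pool rather than on the full population, and it is not in general bounded by $\sigma^2$. The inequality therefore cannot hold literally for an arbitrary realization, so I would have to interpret the comparison suitably. My first attempt would be to read the right-hand side, as the paper does in the surrounding text, as the observation-only variance evaluated on the current pool, i.e., the per-step contribution $K^{-2}\operatorname{Var}(S_{i,j_t}\mid\mathcal F_{t-1})$ of a fresh observation to the running average. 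Under that reading the factor $(|U_i^t|/n)^2\le 1$ closes the argument. Failing that, I would add an assumption that $\sigma^2_U\le\sigma^2$ up to the finite-population correction, or take expectations over $U_i^t$, using that a uniformly random subset has expected variance $\sigma^2\,\tfrac{n}{n-1}\tfrac{|U|-1}{|U|}\le\sigma^2$. With either interpretation in hand, the chain of equalities above yields
\[
\operatorname{Var}(\hat\mu_i^t\mid\mathcal F_{t-1})\le(1-\rho_t^2)\operatorname{Var}(\hat\mu^{\text{obs}}).
\]
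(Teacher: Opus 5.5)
Your first part matches the paper's proof. Conditional on $\mathcal F_{t-1}$, only $Z_i^t$ is random. Its conditional variance is a convex quadratic in $\lambda$, minimized at $(\lambda_i^t)^*$. The minimum is $(1-\rho_t^2)\operatorname{Var}(S_{i,j_t}\mid\mathcal F_{t-1})$, and $\operatorname{Var}(\hat\mu_i^t\mid\mathcal F_{t-1})=K^{-2}\operatorname{Var}(\hat\theta_i^t\mid\mathcal F_{t-1})$.

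The gap is the final comparison. You argue that since $\sigma_U^2$ can exceed $\sigma^2$, the inequality cannot hold for an arbitrary realization. You then fall back on one of three moves: reading $\operatorname{Var}(\hat\mu^{\text{obs}})$ as a conditional per-step quantity, adding a hypothesis on $\sigma_U^2$, or averaging over $U_i^t$. Each of these proves a different statement. The right-hand side of the lemma is the unconditional without-replacement variance $\frac{\sigma^2}{K}\frac{n-K}{n-1}$, which you wrote down yourself. The retreat is also unnecessary: the inequality does hold pathwise, and this does not contradict the fact that $\sigma_U^2$ may exceed $\sigma^2$.

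The missing idea is that you only need a weaker bound:
\[
\sigma_U^2=\frac{1}{|U_i^t|}\sum_{j\in U_i^t}\bigl(S_{i,j}-\bar S_U\bigr)^2\le\frac{1}{|U_i^t|}\sum_{j\in U_i^t}\bigl(S_{i,j}-\mu_i\bigr)^2\le\frac{1}{|U_i^t|}\sum_{j=1}^{n}\bigl(S_{i,j}-\mu_i\bigr)^2=\frac{n}{|U_i^t|}\,\sigma^2 .
\]
Here $\bar S_U$ is the mean over $U_i^t$, which minimizes squared deviations over $U_i^t$; the sum is then extended to all of $[n]$. This is the paper's Lemma~\ref{lem:cond_var_bound}.

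You remarked that the factor $|U_i^t|/n$ helps, but it enters squared. One power absorbs the $n/|U_i^t|$ above, giving
\[
\operatorname{Var}(\hat\mu_i^t\mid\mathcal F_{t-1})\le\frac{|U_i^t|}{K^2n}(1-\rho_t^2)\sigma^2 .
\]
Since $t$ is the $K$-th single-example pull, $|U_i^t|=n-K+1$. Substituting $\sigma^2=K\frac{n-1}{n-K}\operatorname{Var}(\hat\mu^{\text{obs}})$ leaves the prefactor $\frac{n-K+1}{n-K}\cdot\frac{n-1}{nK}$ in front of $(1-\rho_t^2)\operatorname{Var}(\hat\mu^{\text{obs}})$. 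This prefactor equals $1$ at $K=1$. For $2\le K\le n-1$ we have $\frac{n-K+1}{n-K}\le 2$, so the prefactor is at most $\frac{2(n-1)}{nK}<1$. That proves the lemma exactly as stated, with no reinterpretation and no extra assumption.
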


Since both Theorem~\ref{thm:concentration_ineq} (applied to $\hat\mu_i^t$)
and the Bernstein--Serfling inequality (applied to $\hat\mu^{\text{obs}}$)
yield confidence radii driven by the variances bounded above,
Lemma~\ref{optimal_lambda} implies that PULSE's CI is no wider than the
observation-only CI.\footnote{Up to constants of the respective concentration
inequalities.} Moreover, the variance reduction improves as the conditional
correlation $\rho_t$ between the true and predicted scores increases. Hence,
more informative predictions yield tighter confidence intervals through a
smaller residual variance.

In practice, computing $(\lambda_i^t)^*$ is not possible before observing
the new score $S_{i,j_t}$. However, to preserve the conditional unbiasedness guarantee, $\lambda_i^t$ must be chosen using only data available in $\mathcal F_{t-1}$, which does not include $S_{i,j_t}$. We therefore use a predictable plug-in approximation. The
idea is to approximate the conditional variance
\[
\operatorname{Var}(Z_i^t\mid \mathcal F_{t-1})
=
\mathbb E\!\left[
\left(Z_i^t-\mathbb E[Z_i^t\mid \mathcal F_{t-1}]\right)^2
\,\middle|\, \mathcal F_{t-1}
\right]
\]
using quantities that are measurable with respect to $\mathcal F_{t-1}$.
Specifically, we estimate the conditional mean
$\mathbb E[Z_i^t\mid \mathcal F_{t-1}]$ by the average of the previously
observed residual corrections,
\[
\bar Z_i^{t-1}
=
\frac{1}{|A_i(t-1)|}\sum_{s\in A_i(t-1)} Z_i^s,
\]
and approximate the unknown future score $S_{i,j_t}$ by its current prediction
$\hat S_{i,j_t}^t$. This yields the following predictable choice of
$\lambda_i^t$:
\begin{equation*}
\hat{\lambda}_{i}^t
=
\operatorname{clip}\left(
1 -
\frac{
\left(\sum_{j\in U_i^t}\hat S_{ij}^t\right)\bar Z_i^{t-1}
}{
|U_i^t|\sum_{j\in U_i^t}(\hat S_{ij}^t)^2
},
0,1
\right).
\end{equation*}

\subsection*{Putting it all together}
We now assemble the components above into the full PULSE procedure, summarized
in Algorithm~\ref{alg:ucb-e-pp}. At each round, the algorithm selects a model
for evaluation, optionally refits the low-rank factorization model from
Step~1, updates $\hat\lambda_i^t$ using Step~3, samples a new example for the
selected model, and updates the running estimator $\hat\mu_i^t$ and its
confidence interval using Step~2.

The following theorem gives a finite-sample success guarantee for the algorithm,
despite adaptive sampling, sampling without replacement, and time-varying
predictions:
\begin{theorem}
\label{thm:bandits}
Let $i^*=\arg\max_{i\in[m]}\mu_i$ be the unique best model, and define
$\Delta_i=\mu_{i^*}-\mu_i$ for $i\neq i^*$ and $\Delta_{i^*} = \min_{i \ne i^{*}} \Delta_i$. Let $H_1=\sum_{i}\Delta_i^{-2}$ and $\hat i_T$ be the output of Algorithm~\ref{alg:ucb-e-pp} when executed with a budget $T$. Suppose the exploration parameter $a$ satisfies $0\le a\le \frac{25(T-m)}{36H_1}$, and assume the constant $v$ in
Theorem~\ref{thm:concentration_ineq} can be chosen uniformly over all models $i\in[m]$ and all pull counts $K\le T$. Then
\[
\mathbb P(\hat i_T=i^*)
\ge
1-\sum_{i=1}^m\sum_{K=1}^T 2\exp\left(
-\frac{\epsilon_K^2}{
2\left(
\frac{v}{K n^2}
+
\frac{2\epsilon_K}{3K}
\right)}
\right), \quad \epsilon_K=\frac{1}{5}\sqrt{\frac{a}{K}}.
\]
\end{theorem}

The theorem shows that the success probability is controlled by two main
quantities. The first is the exploration parameter $a$, which determines the
width of the confidence bounds used by Algorithm~\ref{alg:ucb-e-pp}. Larger
values of $a$ lead to larger confidence bounds, encourage more exploration, and can improve convergence. However, taking $a$ too large could violate the theorem condition, while taking $a$ too small could reduce the success probability.

The second quantity affecting the success probability is the variance
term $v$ appearing in the confidence bound. Hence, informative predictions,
together with an appropriate choice of $\lambda_i^t$, can reduce the number of
LLM evaluations needed to reach a target success  by decreasing $v$. Conversely, when
the predictions are uninformative, our tuning rule is designed to downweight
the predicted scores by choosing $\hat\lambda_i^t$ close to zero.

\paragraph{From single-example to batch evaluation.}
Since GPUs are most efficient when computations are batched, it is natural to
evaluate multiple examples for the selected model at each step rather than a
single example. This motivates a batch-mode extension of PULSE. The extension
only requires modifying the residual correction term. Specifically, suppose
that at step $t$ we sample a batch $B_t\subset U_i^t$ of new example indices
for model $i$, uniformly without replacement. If each index $j\in U_i^t$ has
inclusion probability $\pi_i^t$, then we replace the single-example correction
$Z_i^t$ by $\tilde Z_i^t
=
\sum_{j\in B_t}
({S_{ij}-\lambda_i^t\hat S_{ij}^t})/\pi_i^t.
$
For uniform batch sampling with $|B_t|=B$, we have $\pi_i^t=B/|U_i^t|$. All other components, $\hat\theta_i^t$, $\hat\mu_i^t$, and the tuning rule for $\lambda_i^t$, are unchanged.

\section{Experiments}
\label{sec:experiments}
\begin{figure*}[t]
\centering
\begin{subfigure}[t]{0.5\textwidth}
    \centering
\includegraphics[width=\textwidth]{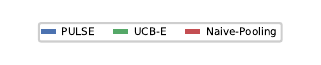}
\end{subfigure}
\\
\begin{subfigure}[t]{0.49\textwidth}
    \centering
    \includegraphics[width=\textwidth]{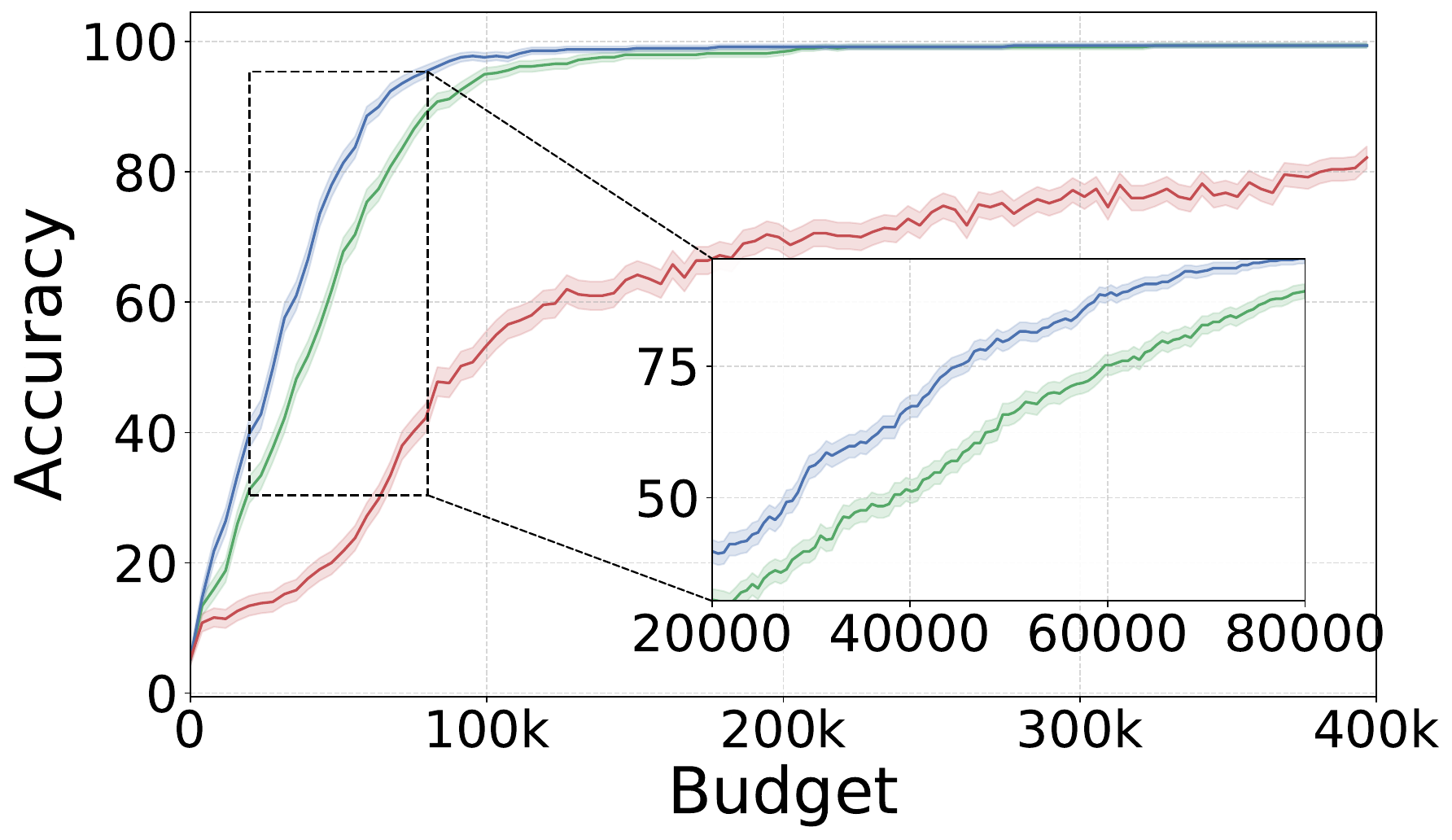}
    \subcaption{Bench 1($\Delta_{\text{top2}}= 0.02$)}
\end{subfigure}
\begin{subfigure}[t]{0.49\textwidth}
    \centering
    \includegraphics[width=\textwidth]{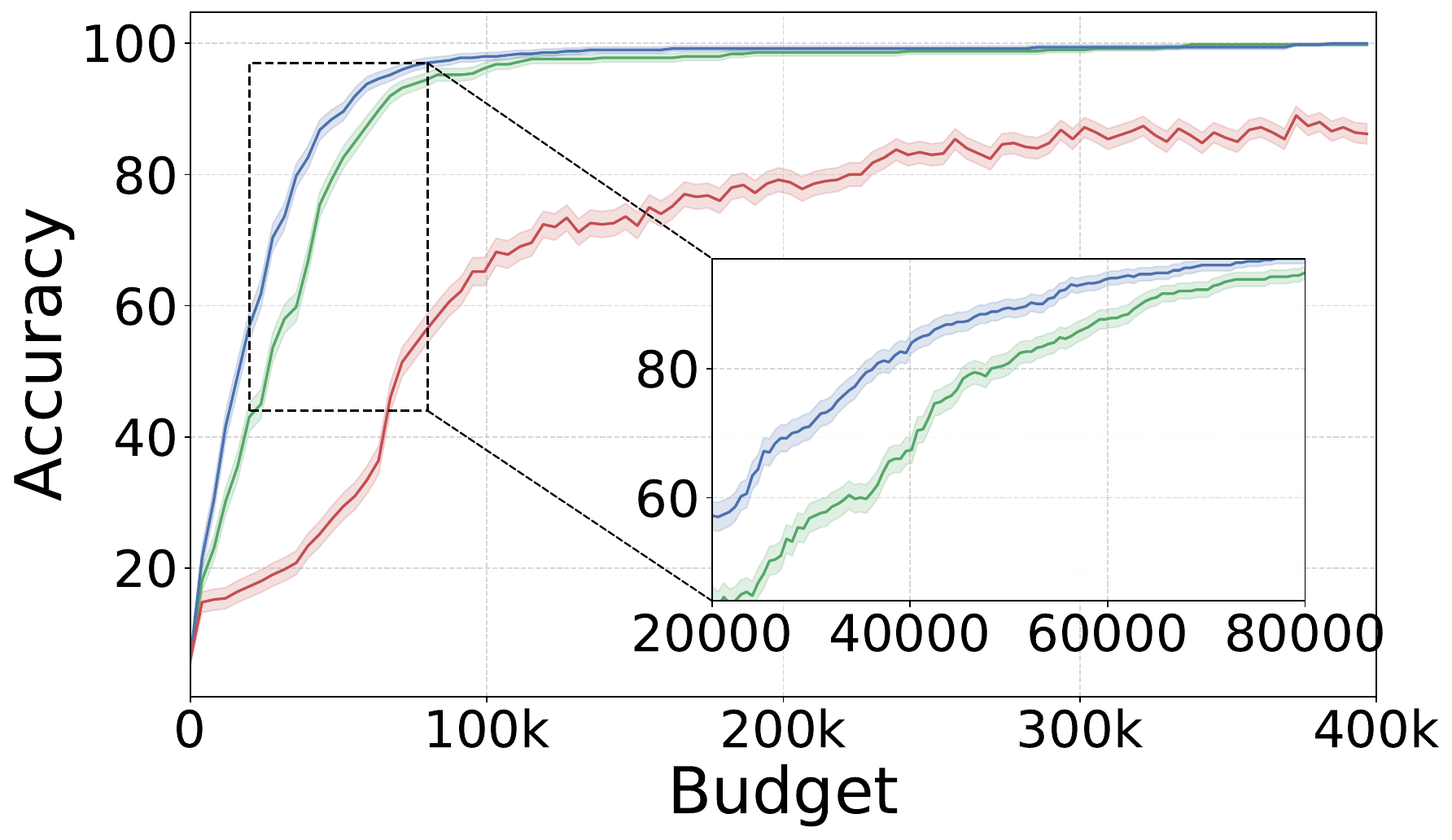}
    \subcaption{Bench 1($\Delta_{\text{top2}}= 0.03$)}
\end{subfigure}
\\
\begin{subfigure}[t]{0.49\textwidth}
    \centering
    \includegraphics[width=\textwidth]{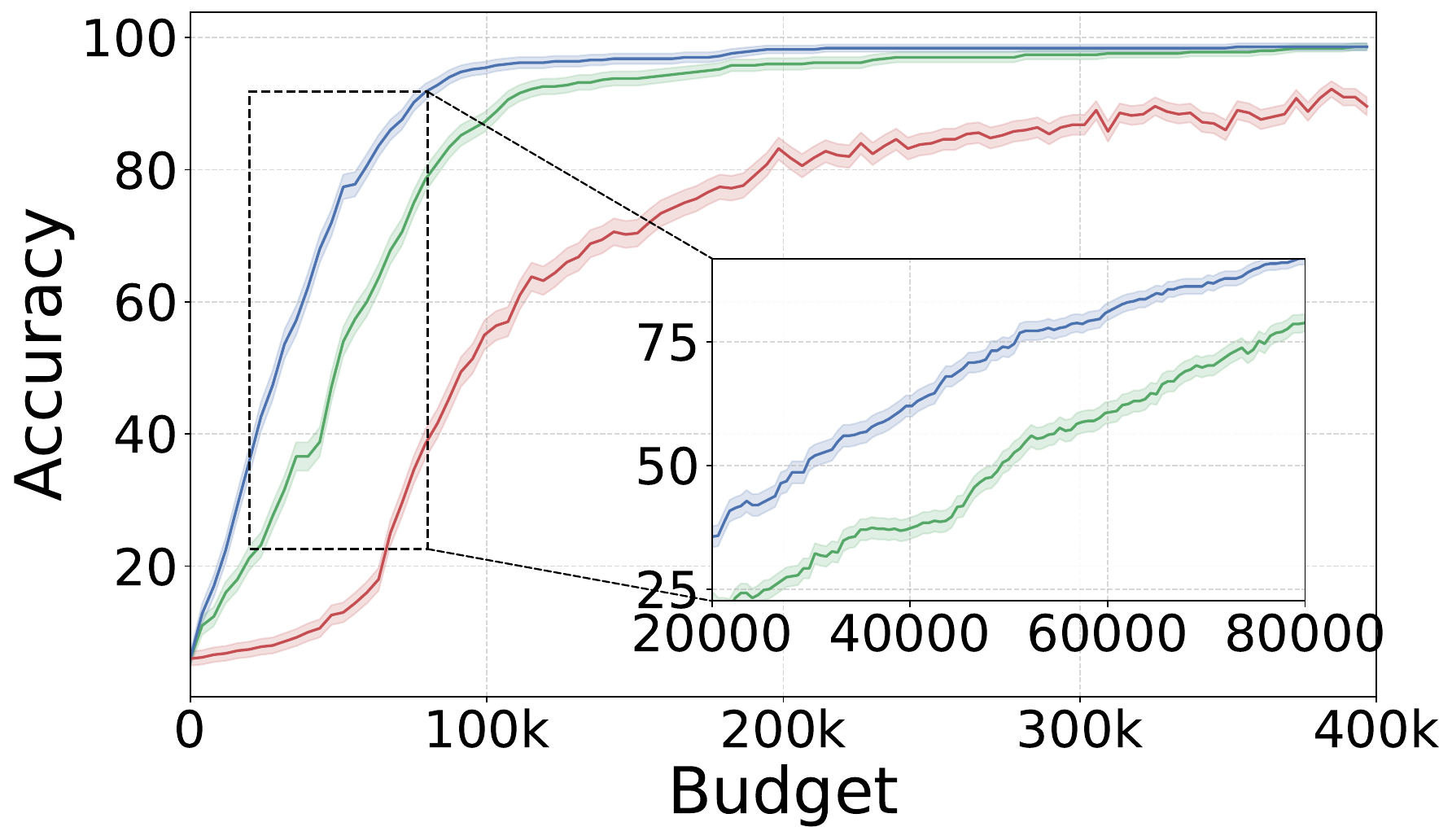}
    \subcaption{Bench 2($\Delta_{\text{top2}}= 0.02$)}
\end{subfigure}
\begin{subfigure}[t]{0.49\textwidth}
    \centering
    \includegraphics[width=\textwidth]{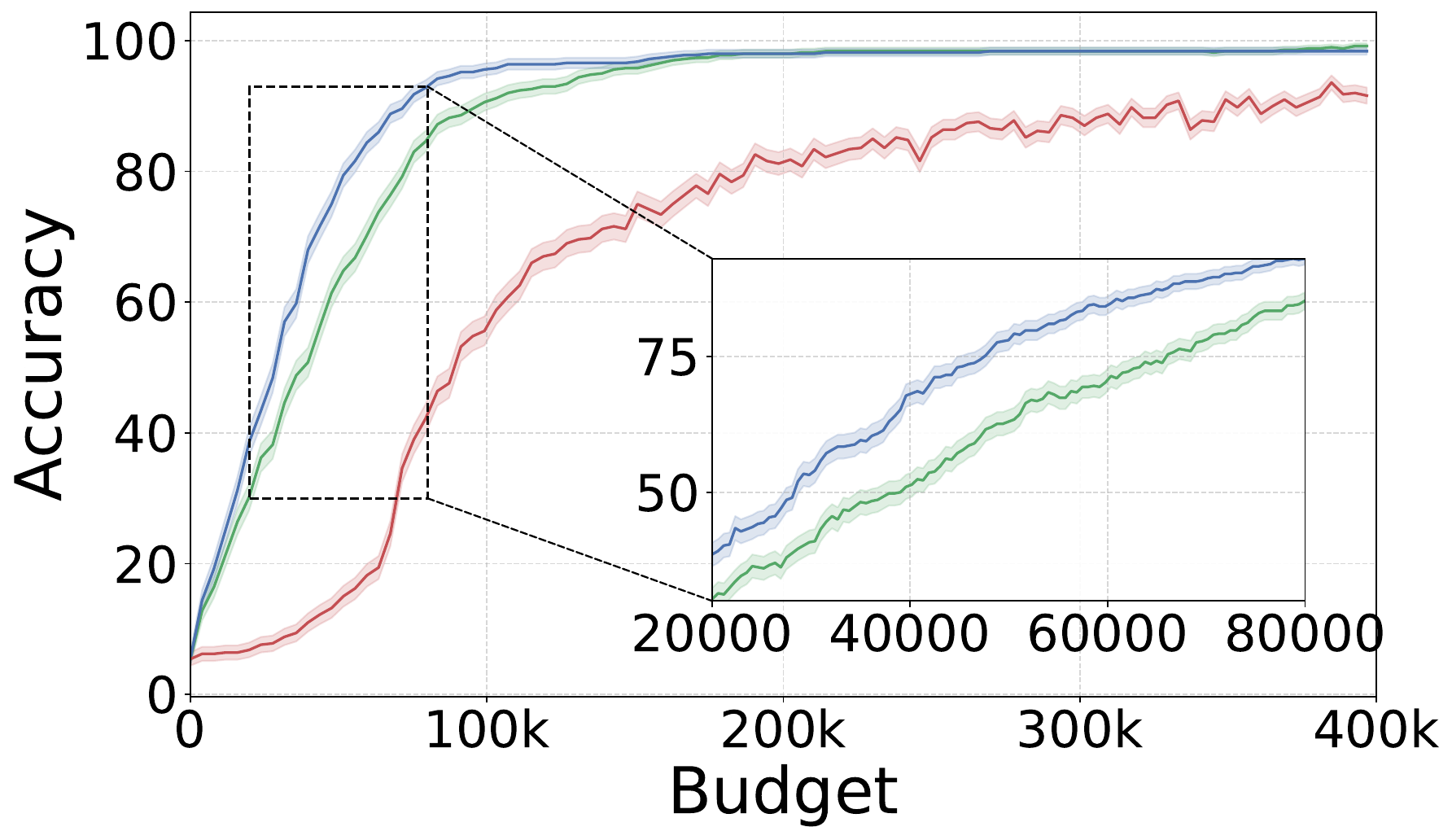}
    \subcaption{Bench 2($\Delta_{\text{top2}}= 0.03$)}
\end{subfigure}
\caption{Accuracy versus budget. Shaded areas around each curve depict standard error.}

\label{fig:acc_vs_budget}
\end{figure*}

We evaluate our algorithm on real-world benchmarks with the goal of assessing two key aspects: (i) the benefit of incorporating predicted scores, and (ii) the effect of correcting their bias when estimating model accuracies and constructing valid CIs.

\paragraph{Baselines.}
We compare PULSE against two baselines that share the UCB-E backbone but differ
in how they estimate each model's mean accuracy: (i)~\textbf{UCB-E}, the
standard UCB-E algorithm, which ignores the low-rank predictions and uses the
observation-only estimator~\eqref{eq:obs_only_estimator}; and
(ii)~\textbf{Naive-Pooling}, a UCB-E variant that does not account for
prediction bias and treats predicted scores as ground truth:
$
\hat{\mu}^{\text{pred}}_{i,t}
=
\frac{1}{n}\left(
\sum_{j \in O^{t-1}_i} S_{i,j}
+
\sum_{j \in U^{t}_i} \hat{S}^{t}_{i,j}
\right).
$

The first baseline isolates the value of using predictions: any gap between PULSE and \text{UCB-E} reflects variance reduction attributed to informative predictions. The second isolates the value of preserving statistical validity: any gap between PULSE and Naive-Pooling reflects the benefit of bias correction.

All methods use batch size \(64\) and total budget \(T=540{,}800\) LLM calls (\(T_0=1\) warmup batch per model plus \(400{,}000\) for the bandit loop). PULSE and Naive-Pooling refit the low-rank factorization every \(1{,}000\) steps. We run \(500\) independent repetitions and report, at each step, the fraction in which the method identifies the true best model. Full parameters in~\autoref{appendix:experiments}.

\paragraph{Performance metrics.} We consider the following metrics: (i) Accuracy@$t$: the probability of correctly identifying the best model at step $t, \hspace{0.05in} 1 \leq t \leq T$ (ii) Budget needed to obtain 95\% accuracy: the budget that each method requires to reach 95\% level of accuracy.

\paragraph{Datasets} We use the evaluation dataset released by~\cite{wu2026efficient}, which
provides binary correctness scores for $4.4K$ models across six benchmarks
with $21.5K$ total number of questions. Following~\cite{wu2026efficient}, we organize the
data into two benchmark datasets:
Bench 1 which contains~\emph{MMLU-Pro}~\cite{wang2024mmlu}, with 12K questions; and Bench 2 - a composite dataset of BBH~\cite{suzgun2022h}, GPQA~\cite{rein2024gpqa},
IFEval~\cite{zhou2023instruction}, MATH~\cite{li2023camel}, and
MuSR~\cite{ICLR2024_3f8c7eb8}, with 9.5K questions. For each dataset, we split the models chronologically according to the models release date: the older half is used as training data to warm-start the low-rank factorization model (as described next), and the remaining half forms the test pool over which best-model identification is performed. To assess our method under varying levels of difficulty, we consider the gap in accuracy between the top-2 models, denoted by $\Delta_{\text{top2}}$, and examine different levels of $\Delta_{\text{top2}} \in \{0.01, 0.02, 0.03\}$. For each target $\Delta_{\text{top2}}$, we construct a subset of $1000$ models as follows. We first include the top-performing model from the full score matrix. We then select as the second-best model the one whose accuracy gap with respect to the top model is closest to $\Delta_{\text{top2}}$. The remaining $998$ models are chosen as the next highest-performing models in descending order of mean accuracy. Smaller values of $\Delta_{\text{top2}}$ correspond to more challenging instances of the problem.

\paragraph{Low-rank model}
We use the train split of each dataset to fit an initial low-rank factorization
\((U_{\text{train}}, V_{\text{train}})\). At test time, we initialize the
question representations by setting \(V \leftarrow V_{\text{train}}\) and keep
\(V\) fixed throughout the run, as described in Stage~1 of
Algorithm~\ref{alg:ucb-e-pp}. The model representations \(U\), in contrast, are
reinitialized and updated online using the observed test-time scores via the
refitting step in the algorithm. Specifically, after each batch of evaluations,
we refit \(U\) while keeping \(V\) fixed. This design leverages the fact that question representations are shared between
the train and test splits and can therefore be estimated from historical data.
In contrast, the representations of the test models must be inferred from
test-time observations, since these models are evaluated only during the
sequential procedure. 

Hyperparameters of the factorization---the regularization strength $\lambda$,
rank $r$, learning rate, and number of training steps---are selected with the
train split via 5-fold cross-validation procedure. Concretely, we use the 1-SE
rule~\cite{friedman2010regularization} that favors more parsimonious
configurations among those within one standard error of the best
cross-validated loss.

\subsection{Results}

For each dataset and test subset, we plot Accuracy@$t$ as a function of $t$.
\autoref{fig:acc_vs_budget} depicts the results for $\Delta_{\text{top2}}=0.02,0.03$.
Comparing \textsc{PULSE} to UCB-E, which relies solely on observed scores, highlights
the benefit of incorporating predicted: \textsc{PULSE} consistently achieves higher accuracy across both datasets, for all values of $\Delta_{\text{top2}}$, and throughout the entire budget range. Importantly, this improvement in accuracy translates into concrete savings in LLM calls. Figure~\ref{fig:budget_to_95_bars} further summarizes the budget required to achieve $95\%$ accuracy. As shown, \textsc{PULSE} reduces the number of required LLM calls by up to $46\%$ compared to UCB-E. Crucially, in terms of runtime, This gain comes with a small computational overhead: on MMLU-Pro with \(T=540\)K,
\textsc{PULSE} takes about \(60\) seconds to run, which is negligible relative to the
cost of the LLM evaluations saved.

To isolate the effect of using our unbiased estimators and valid CIs, we next compare \textsc{PULSE} to the Naive-Pooling baseline.Recall that this baseline uses predicted scores directly, without debiasing the estimated means or ensuring CI validity.. As the results show, failing to account for prediction bias leads to substantial performance degradation. Indeed, Naive-Pooling consistently exhibits the worst performance across both datasets, for all values of $\Delta_{\text{top2}}$, and throughout the entire budget range. This finding confirms that naively incorporating predicted scores without bias correction can be more harmful than not using predictions at all. Additional results, including other metrics, values of \(\Delta_{\text{top2}}\), and full-dataset experiments, are provided in~\autoref{appendix:additional_results}.

\section{Conclusions and Future Work}
\label{sec:conclusion and future work}

We studied the problem of identifying the best-performing LLM under a limited evaluation budget and proposed \textsc{PULSE}, a sequential framework that combines adaptive sampling with low-rank score prediction while preserving statistical validity. By leveraging AIPW, our method constructs unbiased estimators and valid CIs in a non-i.i.d.\ setting. Empirically we demonstrated that \textsc{PULSE} significantly reduces the number of required LLM evaluations while maintaining high identification accuracy. 

Our approach has several limitations. First, its effectiveness depends on the quality of the low-rank factorization; improving the prediction model - for example, by incorporating richer contextual information—could further enhance performance. Second, our analysis focuses on binary scoring functions, and extending the framework to more general (non-binary) scores is an important direction, which we expect can be achieved with relatively minor modifications. Finally, \textsc{PULSE} currently samples examples uniformly at each step. Extending the method to non-uniform, potentially adaptive sampling is a promising direction. Intuitively, for a given model, some examples may be more informative than others for estimating its mean accuracy. Recent work~\cite{wu2026efficient} explores this idea in the single-model setting via adaptive question selection, and integrating such strategies into our framework could further improve efficiency by jointly optimizing over models and examples.

\section{Acknowledgments}
This research was supported the European Union (ERC, SafetyBounds, 101163414). Views and opinions expressed are however those of the authors only and do not necessarily reflect those of the European Union or the European Research Council Executive Agency. Neither the European Union nor the granting authority can be held responsible for them.
Y.R. thanks the Career Advancement Fellowship, Technion.
The authors also thank Matteo Sesia for his valuable feedback and insightful discussions.

\bibliographystyle{unsrtnat}
\bibliography{example_paper}

\appendix

\section{Proofs}
\label{appendix:proofs}

\subsection{Proof of Lemma~\ref{lem:unbiased_estimator}}

\begin{lemma}
Assume $\pi^k_i, \lambda^k_i$ and $\hat{S}^k_{ij}$ are $\mathcal{F}_{k-1}$ measurable, then for each $k \ge 1$, $\mathbb{E}[\hat{\theta}_{i}^k \mid \mathcal{F}_{k-1}] = \mu_i$
\end{lemma}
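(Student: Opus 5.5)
The plan is to condition on $\mathcal{F}_{k-1}$ and split $\hat\theta_i^k$ into a part that is already determined and the single random term $Z_i^k$. Under the hypotheses, $O_i^{k-1}$, $U_i^k=[n]\setminus O_i^{k-1}$, the observed scores $\{S_{i,j}:j\in O_i^{k-1}\}$, $\lambda_i^k$, $\pi_i^k=1/|U_i^k|$ and all predictions $\hat S_{i,j}^k$ are $\mathcal{F}_{k-1}$-measurable. The entries $S_{i,j}$ are fixed constants of the finite population. Hence the first two sums in $\hat\theta_i^k$ can be pulled out of the conditional expectation, and
\[
\mathbb{E}[\hat\theta_i^k\mid\mathcal{F}_{k-1}]=\frac{1}{n}\Big[\sum_{j\in O_i^{k-1}}S_{i,j}+\sum_{j\in U_i^k}\lambda_i^k\hat S_{i,j}^k+\mathbb{E}[Z_i^k\mid\mathcal{F}_{k-1}]\Big].
\]

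The only randomness left given $\mathcal{F}_{k-1}$ is the index $j_k$. By construction it is drawn uniformly from $U_i^k$, independently of everything else given the past. I will state this explicitly as the sampling mechanism of Algorithm~\ref{alg:ucb-e-pp}; model selection at step $k$ is also $\mathcal{F}_{k-1}$-measurable. Therefore
\[
\mathbb{E}[Z_i^k\mid\mathcal{F}_{k-1}]=\sum_{j\in U_i^k}\pi_i^k\,\frac{S_{i,j}-\lambda_i^k\hat S_{i,j}^k}{\pi_i^k}=\sum_{j\in U_i^k}S_{i,j}-\sum_{j\in U_i^k}\lambda_i^k\hat S_{i,j}^k .
\]
The inverse-probability weight cancels the sampling probability exactly. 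This is the standard Horvitz--Thompson identity.

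Substituting back, the prediction sums cancel, which leaves $\frac1n\big[\sum_{j\in O_i^{k-1}}S_{i,j}+\sum_{j\in U_i^k}S_{i,j}\big]$. Because $O_i^{k-1}$ and $U_i^k$ partition $[n]$, this equals $\frac1n\sum_{j=1}^n S_{i,j}=\mu_i$.

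The only delicate point is justifying that the conditional law of $j_k$ given $\mathcal{F}_{k-1}$ is uniform on $U_i^k$. This requires $U_i^k$ to be nonempty, which holds because the algorithm only selects models with $|O_i|<n$. It also requires that the fresh sampling randomness is independent of $\mathcal{F}_{k-1}$. Once this is set up, the argument is one line of algebra. The batch version follows identically, using inclusion probabilities $\pi_i^k=B/|U_i^k|$ and linearity over $j\in U_i^k$ with indicator variables $\mathbf{1}\{j\in B_k\}$.
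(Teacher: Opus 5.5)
Your proposal is correct and follows the paper's proof essentially step for step. You pull the $\mathcal{F}_{k-1}$-measurable sums out of the conditional expectation, and you evaluate $\mathbb{E}[Z_i^k\mid\mathcal{F}_{k-1}]$ by averaging over the uniform law of $j_k$ on $U_i^k$, so the weight $1/\pi_i^k$ cancels. You then use that $O_i^{k-1}$ and $U_i^k$ partition $[n]$. Your explicit remarks on the conditional uniformity of $j_k$ and on $U_i^k\neq\emptyset$ simply make precise what the paper uses implicitly.
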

   
\begin{proof}
\begin{align*}
 \mathbb{E}[\hat{\theta}^k_{i}\mid \mathcal{F}_{k-1}] =& \mathbb{E}\left[\frac{1}{n} \left[\sum_{j\in O^{k-1}_i} S_{i,j} + \sum_{j \in U_i^{k}} \lambda_{i}^{k}\hat{S}_{i,j}^{k}  + \left(\frac{S_{i, j_k} - \lambda_{i}^{k}\hat{S}_{i,j_k}^{k}}{\pi^{k}_i}\right) \right] \Biggm| \mathcal{F}_{k-1}\right] \\ =& \frac{1}{n} \sum_{j\in O^{k-1}_i} S_{i,j} + \frac{1}{n}\sum_{j \in U^k_i} \lambda^k_{i}\hat{S}_{i,j}^k  +\frac{1}{n}\mathbb{E}\left[  \frac{S_{i,j_k} - \lambda^k_{i} \hat{S}_{i,j_k}^k}{\pi^k_i} \Biggm| \mathcal{F}_{k-1} \right] \\
 =& \frac{1}{n} \sum_{j\in O^{k-1}_i} S_{i,j} + \frac{1}{n}\sum_{j \in U^k_i} \lambda^k_{i}\hat{S}_{i,j}^k + \frac{1}{n}\sum_{j \in U^k_i} \pi^k_i \left( \frac{S_{i,j} - \lambda^k_{i} \hat{S}_{i,j}^k}{\pi^k_i} \right) \\
 &= \frac{1}{n} \sum_{j\in O^{k-1}_i} S_{i,j} + \frac{1}{n}\sum_{j \in U^k_i} S_{i,j} = \frac{1}{n} \sum_{j=1}^n S_{ij}  = \mu_i 
\end{align*}
   
\end{proof}

\subsection{Proof of Theorem~\ref{thm:concentration_ineq}}

\begin{theorem}
        Fix model $i$ and step $t$, and assume that $|A_i(t)| = K > 0$. Define $V_i = \frac{1}{|A_i(t)|}\sum_{s \in A_i(t) } \text{Var}(Z_i^s | \mathcal{F}_{s-1})$. Assume that there exist a constant $v>0$ such that $V_i \le v$. Then, under the assumptions of Lemma~\ref{lem:unbiased_estimator},  for every $\epsilon > 0$,
     $$
    P\bigl(|\hat\mu_i^t - \mu_i| \ge \epsilon,\ |A_i(t)| = K\bigr)
\;\le\;
2\exp\!\left(-\frac{\epsilon^2}{2\bigl(\frac{v}{Kn^2} + \frac{2\epsilon}{3K}\bigr)}\right).
    $$
\end{theorem}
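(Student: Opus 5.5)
The plan is to write the estimation error as a normalized martingale and apply Freedman's (martingale Bernstein) inequality in both directions. By Lemma~\ref{lem:unbiased_estimator}, every term of $\hat\theta_i^s$ other than $Z_i^s/n$ is $\mathcal F_{s-1}$-measurable. Hence
\[
\hat\theta_i^s-\mu_i=\frac1n\bigl(Z_i^s-\mathbb E[Z_i^s\mid\mathcal F_{s-1}]\bigr).
\]
Therefore, on the event $\{|A_i(t)|=K\}$,
\[
\hat\mu_i^t-\mu_i=\frac{1}{Kn}\sum_{s\in A_i(t)}\bigl(Z_i^s-\mathbb E[Z_i^s\mid\mathcal F_{s-1}]\bigr).
\]

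To handle the random index set $A_i(t)$, I would use the indicator $\xi_s=\mathbf 1\{\text{model } i \text{ is selected at step } s\}$. This indicator is $\mathcal F_{s-1}$-measurable, because the UCB rule uses only past data. Define
\[
D_s=\frac{\xi_s}{Kn}\bigl(Z_i^s-\mathbb E[Z_i^s\mid\mathcal F_{s-1}]\bigr),\qquad M_t=\sum_{s\le t}D_s .
\]
Here $K$ is a fixed deterministic constant, not the random count. Then $(M_t)$ is an $(\mathcal F_t)$-martingale, and on $\{|A_i(t)|=K\}$ it coincides with $\hat\mu_i^t-\mu_i$.

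The next step is to verify the two inputs Freedman's inequality needs.

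\emph{Increment bound.} Since $Z_i^s/\pi_i^s=|U_i^s|\,(S_{i,j_s}-\lambda_i^s\hat S^s_{i,j_s})$, with $|U_i^s|\le n$ and $S_{i,j}\in\{0,1\}$, $\lambda_i^s\hat S^s_{i,j}\in[0,1]$, we get $|Z_i^s|\le n$. Its conditional mean is bounded by $n$ as well. Hence $|D_s|\le 2/K$, which produces the $\frac{2\epsilon}{3K}$ term, since Freedman's bound has the form $bx/3$ with $b=2/K$.

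\emph{Predictable quadratic variation.} We have
\[
W_t=\sum_{s\le t}\operatorname{Var}(D_s\mid\mathcal F_{s-1})=\frac{1}{K^2n^2}\sum_{s\in A_i(t)}\operatorname{Var}(Z_i^s\mid\mathcal F_{s-1}).
\]
On $\{|A_i(t)|=K\}$ this equals $\frac{K V_i}{K^2n^2}\le \frac{v}{Kn^2}$.

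Freedman's inequality then gives
\[
P\bigl(M_t\ge\epsilon,\ W_t\le \tfrac{v}{Kn^2}\bigr)\le\exp\!\Bigl(-\frac{\epsilon^2}{2(\frac{v}{Kn^2}+\frac{2\epsilon}{3K})}\Bigr).
\]
The event $\{\hat\mu_i^t-\mu_i\ge\epsilon,\ |A_i(t)|=K\}$ is contained in the event on the left. Applying the same argument to $-M_t$ and taking a union bound yields the factor $2$.

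I expect the main obstacle to be the random number of pulls $K$. The normalization $1/K$ must be deterministic for $M_t$ to be a martingale. This is why I fix $K$ in advance and intersect with $\{|A_i(t)|=K\}$, and why the theorem is stated as a joint probability rather than a conditional one. I also need the variance bound $V_i\le v$ to hold on that event, which the hypothesis supplies. A further subtlety is that $\pi_i^s$ and $\hat S^s$ vary over time, but this is absorbed because both are $\mathcal F_{s-1}$-measurable.
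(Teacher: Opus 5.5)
Your proposal is correct and follows the paper's proof: both pad the increments with the $\mathcal F_{s-1}$-measurable selection indicator, bound the increments via $|Z_i^s|\le |U_i^s|\le n$, bound the predictable quadratic variation using $V_i\le v$ on $\{|A_i(t)|=K\}$, and apply Freedman's inequality in both directions with a union bound. The only difference is cosmetic: you fold the deterministic factor $1/K$ into the increments (increment bound $2/K$, threshold $\epsilon$), whereas the paper keeps increments bounded by $2$ and uses threshold $K\epsilon$, which is equivalent by rescaling. Also fix the typo $Z_i^s/\pi_i^s$, which should read $Z_i^s$.
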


\begin{proof}
For $s\in A_i(t)$, Define $D^s_i = \hat{\theta}^s_i - \mathbb{E}[\hat{\theta}_i^s | \mathcal{F}_{s-1}]$.
     Recall that:
     $$\hat{\theta}_{i}^s =  \frac{1}{n} \sum_{j\in O^{s-1}_i} S_{i,j} + \frac{1}{n}\sum_{j \in U^s_i} \lambda_{i}^s\hat{S}_{i,j}^s  + \frac{1}{n} \left(\frac{S_{i, j_s} - \lambda_{i}^s\hat{S}_{i,j_s}^s}{\pi^s_i}\right). $$

     Given $\mathcal{F}_{s-1}$, the only random part of $\hat{\theta}_{i}^s$ is $Z^s_i = \frac{S_{i, j_s} - \lambda_{i}^s\hat{S}_{i,j_s}^s}{\pi^s_i}$. Thus:

     $$
     \mathbb{E}[\hat{\theta}_{i}^s |\mathcal{F}_{s-1}] = \frac{1}{n} \sum_{j\in O^{s-1}_i} S_{i,j} + \frac{1}{n}\sum_{j \in U^s_i} \lambda_{i}^s\hat{S}_{i,j}^s  + \frac{1}{n}\mathbb{E}[Z^s_i | \mathcal{F}_{s-1}].
     $$

     Therefore:
     
     $$
     \hat{\theta}_{i}^s - \mathbb{E}[\hat{\theta}_i^s | \mathcal{F}_{s-1}] =  \frac{1}{n}(Z^s_i - \mathbb{E}[Z^s_i | \mathcal{F}_{s-1}]).
     $$

     Next, since $\lambda^s_i,\hat{S}^s_{i,j},S_{i,j} \in [0,1]$ and since $\pi^s_i = 1/|U^s_i|$ we have that $ -|U^s_i|\le Z^s_i \le |U^s_i|$. Thus $- 2|U^s_i| \le Z^s_i - \mathbb{E}[Z^s_i |\mathcal{F}_{s-1}] \le 2|U^s_i|$. Since $|U^s_i| \le n$, we have that:

     \begin{equation} \label{eq:bound on mu_hat}
        |\hat{\theta}_{i}^s - \mathbb{E}[\hat{\theta}_i^s | \mathcal{F}_{s-1}]| \le 2.
    \end{equation}

      Additionally, the following holds:
      
      $$
      \sum_{s \in A_i(t)} \mathbb{E}[(D^s_i)^2 \mid \mathcal{F}_{s-1}] = \sum_{s \in A_i(t)} \mathbb{E}[(\hat{\theta}_{i}^s - \mathbb{E}[\hat{\theta}_i^s | \mathcal{F}_{s-1}])^2 \mid \mathcal{F}_{s-1}] = \sum_{s \in A_i(t)} \text{Var}(\hat{\theta}_{i}^s \mid \mathcal{F}_{s-1}).
      $$

      As before, given $\mathcal{F}_{s-1}$, the only random part of $\hat{\theta}^s_i$ is $Z^s_i$. Thus:

      $$
       \sum_{s \in A_i(t)} \text{Var}(\hat{\theta}_{i}^s \mid \mathcal{F}_{s-1}) = \frac{1}{n^2}\sum_{s \in A_i(t)} \text{Var}(\hat{Z}_{i}^s \mid \mathcal{F}_{s-1}).
      $$
      
      Combining this with the assumption that $V_i<v$ we get:
      \begin{equation}\label{eq:bound_on_mu_hat_variance}
          \sum_{s \in A_i(t)} \text{Var}(\hat{\theta}_{i}^s \mid \mathcal{F}_{s-1}) \le \frac{vK}{n^2}
      \end{equation}

      Finally, It holds that:
      \begin{equation}\label{eq:self normalized}
        \mathbb{E}[D^s_i | \mathcal{F}_{s-1}] = 0.  
      \end{equation}

      Combining~\eqref{eq:bound on mu_hat},~\eqref{eq:bound_on_mu_hat_variance}, and~\eqref{eq:self normalized} we get that $(D^s_i)_{s \in \mathbb{N}}$ is a martingale difference sequence (M.D.S) and $S^K_i = \sum_{s \in A_i(t)} D^s_i$ is a martingale with bounded differences. 
      
      Equivalently, define $\tilde D^s_i = D^s_i \cdot \mathrm{1}\{i_s = i\}$ for $s = 1, \ldots, t$; since $\mathrm{1}\{i_s = i\}$ is $\mathcal{F}_{s-1}$-measurable by construction, $(\tilde D^s_i)_{s=1}^t$ is a martingale difference sequence w.r.t.\ $(\mathcal{F}_s)$ with $S^K_i = \sum_{s=1}^t \tilde D^s_i$, and the bound and conditional-variance estimates above carry over. With slight abuse of notation, we drop the tilde and write $D^s_i$ for $\tilde D^s_i$ in what follows.
      
      By Freedman's inequality~\cite{dzhaparidze2001bernstein, freedman1975tail}
applied to the martingale $(\tilde D_i^s)_{s\ge 1}$ at the deterministic
time $t$, with $|\tilde D_i^s|\le 2$ from~\eqref{eq:bound on mu_hat} and
predictable quadratic variation $\langle S^{(i)}\rangle_t = \sum_{s=1}^{t}
\mathbb{E}[(\tilde D_i^s)^2\mid \mathcal{F}_{s-1}]$, for every $\epsilon'>0$,
\[
P\bigl(|S_t^{(i)}|\ge \epsilon',\ \langle S^{(i)}\rangle_t \le K v/n^2\bigr)
\;\le\;
2\exp\!\left(-\frac{(\epsilon')^2}{2\bigl(K v/n^2 + 2\epsilon'/3\bigr)}\right).
\]
We now translate this into a bound on the event of interest. On
$\{|A_i(t)|=K\}$, the padding indicators yield $S_t^{(i)} = \sum_{s\in A_i(t)} D_i^s
= K(\hat\mu_i^t - \mu_i)$, and the assumption $V_i\le v$ gives
$\langle S^{(i)}\rangle_t = \sum_{s\in A_i(t)}\operatorname{Var}(\hat\theta_i^s
\mid\mathcal{F}_{s-1}) \le K v/n^2$ by~\eqref{eq:bound_on_mu_hat_variance}.
Hence
\[
\{|\hat\mu_i^t - \mu_i| \ge \epsilon,\ |A_i(t)| = K\}
\;\subseteq\;
\bigl\{|S_t^{(i)}|\ge K\epsilon,\ \langle S^{(i)}\rangle_t \le K v/n^2\bigr\}.
\]
Setting $\epsilon' = K\epsilon$ in Freedman's bound and combining with this
inclusion yields
\[
P\bigl(|\hat\mu_i^t - \mu_i|\ge\epsilon,\ |A_i(t)| = K\bigr)
\;\le\;
2\exp\!\left(-\frac{\epsilon^2}{2\bigl(v/(Kn^2) + 2\epsilon/(3K)\bigr)}\right),
\]
    which completes the proof.

\end{proof}

\subsection{Proof of Lemma~\ref{optimal_lambda}}

\begin{lemma}
    
Fix a model $i$ and a step $t$. Assume model $i$ is selected at step $t$ and  $|A_i(t)|=K$. The conditional variance of the residual correction $Z_i^t$ is minimized by choosing
\begin{equation}
(\lambda_i^t)^*
=
\frac{
\operatorname{Cov}\!\left(
S_{i,j_t},\,
\hat S^t_{i,j_t}
\,\middle|\, \mathcal F_{t-1}
\right)}
{
\operatorname{Var}\!\left(
\hat S^t_{i,j_t}
\,\middle|\, \mathcal F_{t-1}
\right)
}.
\end{equation}
At this choice of $(\lambda_i^t)^*$,

$$
 \text{Var}(\hat\mu^t_i | \mathcal{F}_{t-1}) \le (1- \rho^2_t)\text{Var}(\hat{\mu}^{\text{obs}})
$$
where $
\rho_t = \operatorname{Corr}\!\left(S_{i,j_t},\,\hat S^t_{i,j_t}\,\middle|\, \mathcal F_{t-1}\right).$ is the conditional correlation between the true and predicted scores over the currently unobserved examples.
\end{lemma}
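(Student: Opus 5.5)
Given $\mathcal F_{t-1}$, the only randomness in $Z_i^t$ is the uniform draw $j_t\in U_i^t$, and $\pi_i^t=1/|U_i^t|$, $\lambda_i^t$ and $\hat S^t_{ij}$ are fixed by the hypotheses of Lemma~\ref{lem:unbiased_estimator}. So I will treat everything as a finite-population computation over $U_i^t$ with the pair $(X,Y)=(S_{i,j_t},\hat S^t_{i,j_t})$ drawn uniformly. Since $\pi_i^t$ is predictable,
\[
\operatorname{Var}(Z_i^t\mid\mathcal F_{t-1})=|U_i^t|^2\Big(\operatorname{Var}(X\mid\mathcal F_{t-1})-2\lambda\operatorname{Cov}(X,Y\mid\mathcal F_{t-1})+\lambda^2\operatorname{Var}(Y\mid\mathcal F_{t-1})\Big),
\]
which is a convex quadratic in $\lambda$. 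Setting its derivative to zero gives $(\lambda_i^t)^*=\operatorname{Cov}(X,Y\mid\mathcal F_{t-1})/\operatorname{Var}(Y\mid\mathcal F_{t-1})$. This is the standard control-variate argument. If $\operatorname{Var}(Y\mid\mathcal F_{t-1})=0$, any $\lambda$ is optimal and I will set $\rho_t=0$ by convention. Substituting back gives the minimum value $|U_i^t|^2(1-\rho_t^2)\operatorname{Var}(X\mid\mathcal F_{t-1})$.

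Next I translate this into a statement about $\hat\mu_i^t$. Only $\hat\theta_i^t$ is random given $\mathcal F_{t-1}$, since the earlier $\hat\theta_i^s$ are $\mathcal F_{t-1}$-measurable. So $\operatorname{Var}(\hat\mu_i^t\mid\mathcal F_{t-1})=K^{-2}n^{-2}\operatorname{Var}(Z_i^t\mid\mathcal F_{t-1})=K^{-2}(|U_i^t|/n)^2(1-\rho_t^2)\operatorname{Var}(X\mid\mathcal F_{t-1})$. On the observation-only side, I will read $\operatorname{Var}(\hat\mu^{\text{obs}})$ in the same one-step sense: the conditional variance contributed by the new draw is $K^{-2}\operatorname{Var}(S_{i,j_t}\mid\mathcal F_{t-1})$. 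Because $|U_i^t|\le n$, the factor $(|U_i^t|/n)^2\le1$, and the inequality follows.

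The main obstacle is pinning down what $\operatorname{Var}(\hat\mu^{\text{obs}})$ means. The statement compares against the \emph{unconditional} finite-population variance $\operatorname{Var}(S_i)/K$ (up to the finite-population correction), whereas the natural comparison is with the conditional variance $\operatorname{Var}(X\mid\mathcal F_{t-1})$ over the remaining examples. These can differ, since the remaining pool's variance may exceed $\operatorname{Var}(S_i)$. My plan is to state the result for the conditional one-step analogue of $\hat\mu^{\text{obs}}$, where the argument is clean. I would then remark that the unconditional version holds in expectation over the sampling, because $\mathbb E[\operatorname{Var}(X\mid\mathcal F_{t-1})]\le\operatorname{Var}(S_i)$ by the law of total variance applied to uniform sampling without replacement. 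I would also flag that the $1/K^2$ versus $1/K$ scaling must be matched carefully, since the $1/K^2$ scaling only reflects the one-step contribution.
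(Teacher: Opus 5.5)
\textbf{There is a genuine gap: you prove a reinterpreted statement, not the one given.} Your control-variate step, giving $(\lambda_i^t)^*$ and $\operatorname{Var}(Z_i^t\mid\mathcal F_{t-1})=|U_i^t|^2(1-\rho_t^2)\operatorname{Var}(S_{i,j_t}\mid\mathcal F_{t-1})$, matches the paper. So does the reduction $\operatorname{Var}(\hat\mu_i^t\mid\mathcal F_{t-1})=K^{-2}n^{-2}\operatorname{Var}(Z_i^t\mid\mathcal F_{t-1})$.

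The problem is the second half. You replace $\operatorname{Var}(\hat\mu^{\text{obs}})$ by the conditional one-step surrogate $K^{-2}\operatorname{Var}(S_{i,j_t}\mid\mathcal F_{t-1})$. In the lemma, $\operatorname{Var}(\hat\mu^{\text{obs}})$ is the unconditional finite-population variance $\frac{\operatorname{Var}(S_i)}{K}\cdot\frac{n-K}{n-1}$. The inequality is claimed for every realization of $\mathcal F_{t-1}$. Your fallback that it ``holds in expectation'' is strictly weaker than this and is not what is asserted.

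\textbf{The missing idea.} The obstacle you identified, that the remaining pool's variance can exceed $\operatorname{Var}(S_i)$, can be controlled deterministically:
\begin{itemize}
\item The pool mean minimizes the sum of squares, so recentering at $\mu_i$ can only increase it. Extending the sum from $U_i^t$ to all $n$ examples then gives $\operatorname{Var}(S_{i,j_t}\mid\mathcal F_{t-1})\le\frac{1}{|U_i^t|}\sum_{j=1}^n(S_{i,j}-\mu_i)^2=\frac{n}{|U_i^t|}\operatorname{Var}(S_i)$.
\item Plugging this in yields $\operatorname{Var}(\hat\mu_i^t\mid\mathcal F_{t-1})\le\frac{|U_i^t|}{K^2n}(1-\rho_t^2)\operatorname{Var}(S_i)$. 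The loss factor $n/|U_i^t|$ is absorbed by your $(|U_i^t|/n)^2$.
\item The $1/K^2$ versus $1/K$ mismatch you flagged as a hazard is actually the slack that finishes the proof. With $|U_i^t|=n-K+1$, the ratio of this bound to $\operatorname{Var}(\hat\mu^{\text{obs}})$ is $\frac{|U_i^t|}{|U_i^t|-1}\cdot\frac{n-1}{nK}$.
\item This ratio equals $1$ for $K=1$ and is at most $2/K\le 1$ for $2\le K\le n-1$.
\end{itemize}
These two steps give the stated inequality pointwise in $\mathcal F_{t-1}$, with no need to reinterpret $\hat\mu^{\text{obs}}$ or to average over the sampling. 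Your convention $\rho_t=0$ when $\operatorname{Var}(\hat S^t_{i,j_t}\mid\mathcal F_{t-1})=0$ is a reasonable addition that the paper omits.
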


\begin{proof}
    Recall that:
    $$ 
        Z_{i}^t = \frac{S_{i, j_t} - \lambda_i^t\hat{S}_{i,j_t}^t}{\pi^t_i}.
    $$

    Thus:

    $$
    \text{Var}(Z^t_i\mid \mathcal{F}_{t-1}) = \mathbb{E}[(Z^t_i - \mathbb{E}[Z^t_i | \mathcal{F}_{t-1}])^2 \mid \mathcal{F}_{t-1}].
    $$

    Given $\mathcal{F}_{t-1}$, $\lambda^t_i$, $\pi^t_i$ are constant, thus:
    $\mathbb{E}[Z^t_i | \mathcal{F}_{t-1}] = \frac{1}{\pi^t_i}(\mathbb{E}[S_{i, j_t} \mid \mathcal{F}_{t-1}] - \lambda^t_i \mathbb{E}[\hat{S}_{i,j_t}^t \mid \mathcal{F}_{t-1}])$.

    Therefore:
    \begin{align*}
    (Z^t_i - \mathbb{E}[Z^t_i | \mathcal{F}_{t-1}])^2 =& \left[\frac{1}{\pi^t_i}(S_{i, j_t} - \lambda_i^t\hat{S}_{i,j_t}^t) - \frac{1}{\pi^t_i}(\mathbb{E}[S_{i, j_t} \mid \mathcal{F}_{t-1}] - \lambda^t_i \mathbb{E}[\hat{S}_{i,j_t}^t \mid \mathcal{F}_{t-1}])\right]^2 = \\
    &\frac{1}{(\pi^t_i)^2}((S_{i, j_t} - \mathbb{E}[S_{i,j_t} \mid \mathcal{F}_{t-1}]) - \lambda_i^t(\hat{S}_{i,j_t}^t - \mathbb{E}[\hat{S}_{i,j_t}^t \mid \mathcal{F}_{t-1}]))^2
    \end{align*}

    Denote $\eta = \frac{1}{\pi^t_i}(S_{i, j_t} - \mathbb{E}[S_{i,j_t} \mid \mathcal{F}_{t-1}])$, and $\xi = \frac{1}{\pi^t_i}(\hat{S}_{i,j_t}^t - \mathbb{E}[\hat{S}_{i,j_t}^t \mid \mathcal{F}_{t-1}])$.

    We have that:
    $ (Z^t_i - \mathbb{E}[Z^t_i | \mathcal{F}_{t-1}])^2 = \eta^2 + (\lambda^t_i)^2\xi^2 - 2\lambda^t_i\eta\xi $.

    Thus:
    \begin{equation}
    \label{eq:var_decomposition}
    \text{Var}(Z^t_i\mid \mathcal{F}_{t-1}) = \mathbb{E}[\eta^2 \mid \mathcal{F}_{t-1}] + (\lambda^t_i)^2 \mathbb{E}[\xi^2 \mid \mathcal{F}_{t-1}] - 2\lambda^t_i \mathbb{E}[\eta\xi \mid \mathcal{F}_{t-1}].
    \end{equation}

    This is a quadratic equation in $\lambda^t_i$, which optimum is:
    \begin{equation}
    \label{eq:optiaml_lam_eta_xi}
        (\lambda^t_i)^* = \frac{\mathbb{E}[\eta\xi \mid \mathcal{F}_{t-1}]}{\mathbb{E}[\xi^2 \mid \mathcal{F}_{t-1}]} 
    \end{equation}

    Substituting $\eta$ and $\xi$ we get
    $$
     (\lambda^t_i)^* = \frac{\text{Cov}\left(\frac{S_{i,j_t}}{\pi^t_i}, \frac{\hat{S}^t_{i,j_t}}{\pi^t_i} \mid \mathcal{F}_{t-1}\right)}{\text{Var}\left(\frac{\hat{S}^t_{i,j_t}}{\pi^t_i} \mid \mathcal{F}_{t-1}\right)}.
    $$

    Given $\mathcal{F}_{t-1}$, $\pi^t_i$ is a constant, thus we can pull it out of the variance and covariance to get:

     $$
     (\lambda_i^t)^*
=
\frac{
\operatorname{Cov}\!\left(
S_{i,j_t},\,
\hat S^t_{i,j_t}
\,\middle|\, \mathcal F_{t-1}
\right)}
{
\operatorname{Var}\!\left(
\hat S^t_{i,j_t}
\,\middle|\, \mathcal F_{t-1}
\right)
}.
    $$

    Next, plug in~\eqref{eq:optiaml_lam_eta_xi} in~\eqref{eq:var_decomposition} to get:

    \begin{align*}
        \text{Var}(Z^t_i\mid \mathcal{F}_{t-1})\big|_{(\lambda^{t}_i)^*} =& \mathbb{E}[\eta^2 \mid \mathcal{F}_{t-1}] + \left(\frac{\mathbb{E}[\eta\xi \mid \mathcal{F}_{t-1}]}{\mathbb{E}[\xi^2 \mid \mathcal{F}_{t-1}]}\right)^2 \mathbb{E}[\xi^2 \mid \mathcal{F}_{t-1}] - 2\frac{\mathbb{E}[\eta\xi \mid \mathcal{F}_{t-1}]}{\mathbb{E}[\xi^2 \mid \mathcal{F}_{t-1}]} \mathbb{E}[\eta\xi \mid \mathcal{F}_{t-1}] = \\
        & \mathbb{E}[\eta^2 \mid \mathcal{F}_{t-1}] + \frac{\left(\mathbb{E}[\eta\xi \mid \mathcal{F}_{t-1}]\right)^2}{\mathbb{E}[\xi^2 \mid \mathcal{F}_{t-1}]}  - 2\frac{\left(\mathbb{E}[\eta\xi \mid \mathcal{F}_{t-1}]\right)^2}{\mathbb{E}[\xi^2 \mid \mathcal{F}_{t-1}]} =\\
        & \mathbb{E}[\eta^2 \mid \mathcal{F}_{t-1}] - \frac{\left(\mathbb{E}[\eta\xi \mid \mathcal{F}_{t-1}]\right)^2}{\mathbb{E}[\xi^2 \mid \mathcal{F}_{t-1}]} = \mathbb{E}[\eta^2 \mid \mathcal{F}_{t-1}]\left(1 - \frac{\left(\mathbb{E}[\eta\xi \mid \mathcal{F}_{t-1}]\right)^2}{\mathbb{E}[\xi^2 \mid \mathcal{F}_{t-1}]\mathbb{E}[\eta^2 \mid \mathcal{F}_{t-1}]} \right) = \\
        & \text{Var}\left(\frac{S_{i,j_t}}{\pi^t_i }\mid \mathcal{F}_{t-1} \right)(1 - \rho_t^2) =|U^t_i|^2\text{Var}\left(S_{i,j_t}\mid \mathcal{F}_{t-1} \right)(1 - \rho_t^2).
    \end{align*}

    Where the last transition is due to $1/\pi^t_i = |U^t_i|$.

    Next, since
    $$
    \hat{\theta}_{i}^t =  \frac{1}{n} \sum_{j\in O^{t-1}_i} S_{i,j} + \frac{1}{n}\sum_{j \in U^t_i} \lambda_{i}^s\hat{S}_{i,j}^t  + \frac{1}{n} Z^t_i. 
    $$

    We have that:

    $$
    \text{Var}(\hat{\theta}_{i}^t \mid \mathcal{F}_{t-1}) = \frac{1}{n^2}\text{Var}(Z^t_i\mid \mathcal{F}_{t-1})
    $$

    Thus:
    $$
    \text{Var}(\hat{\theta}_{i}^t \mid \mathcal{F}_{t-1}) = \frac{|U^t_i|^2}{n^2}\text{Var}\left(S_{i,j_t}\mid \mathcal{F}_{t-1} \right)(1 - \rho_t^2)
    $$

    Finally, since $\hat\mu^t_i := \frac{1}{|A_i(t)|}\sum_{s\in A_i(t)} \hat\theta^s_i$ and $|A_i(t)|=K$ we have that:
    
    \begin{align}
        \text{Var}(\hat\mu^t_i | \mathcal{F}_{t-1}) &= \text{Var}\left( \frac{1}{|A_i(t)|}\sum_{s\in A_i(t)}\hat\theta^s_i \mid \mathcal{F}_{t-1}\right) \\
        &=\frac{1}{K^2}\text{Var}\left(\hat{\theta}_{i}^t \mid \mathcal{F}_{t-1} \right) \\
        &= \frac{1}{K^2}\frac{|U^t_i|^2}{n^2}\text{Var}\left(S_{i,j_t}\mid \mathcal{F}_{t-1} \right)(1 - \rho_t^2) \\
        \label{eq:lemma_2_var_estimator_bound}
        &\le \frac{1}{K^2}\frac{|U^t_i|^2}{n^2}\frac{n}{|U_i^t|}\cdot\operatorname{Var}(S_i)(1 - \rho_t^2).
    \end{align}

    Where the last inequality stems from Lemma~\ref{lem:cond_var_bound}. 
    
    The variance of the fully observed estimator is:
     $$
     \operatorname{Var}(\hat\mu^{\text{obs}}) = \frac{\operatorname{Var}(S_i)}{K}\cdot\frac{n-K}{n-1},
     $$
     where $\operatorname{Var}(S_i) = \frac{1}{n}\sum_j (S_{i,j}-\mu_i)^2$ is the finite-population variance~\cite{bardenet2015concentration}.

     Thus: 
     \begin{equation}
     \label{eq:lemma_2_var_score_var_observed_estimator}
     \operatorname{Var}(S_i)=\operatorname{Var}(\hat\mu^{\text{obs}})K\frac{n-1}{n-K}.
     \end{equation}

     Combining~\eqref{eq:lemma_2_var_estimator_bound} and~\eqref{eq:lemma_2_var_score_var_observed_estimator} yields:
     \begin{align*}
     \text{Var}(\hat\mu^t_i | \mathcal{F}_{t-1}) \le  \frac{1}{K^2}\frac{|U^t_i|^2}{n^2}\frac{n}{|U_i^t|}\cdot\operatorname{Var}(\hat\mu^{\text{obs}})K\frac{n-1}{n-K}(1 - \rho_t^2)
     \end{align*}
     
     Substitute $|U^t_i| = n-K +1$ we have that:

     \begin{align}
     \label{eq:lemma_2_var_estimator_var_observed_estimator}
     \text{Var}(\hat\mu^t_i | \mathcal{F}_{t-1}) \le 
     \frac{|U^t_i|}{|U^t_i| - 1}\frac{n-1}{nK}\operatorname{Var}(\hat\mu^{\text{obs}})(1 - \rho_t^2)
     \end{align}

     For $K=1$ we have that $|U^t_i| = n$ and:
     \begin{align} 
     \text{Var}(\hat\mu^t_i | \mathcal{F}_{t-1}) \le \frac{|U^t_i|}{|U^t_i| - 1}\frac{n-1}{nK}\operatorname{Var}(\hat\mu^{\text{obs}})(1 - \rho_t^2) = \\
    \label{eq:lemma_2_K_1}
     \frac{n}{n - 1}\frac{n-1}{n}\operatorname{Var}(\hat\mu^{\text{obs}})(1 - \rho_t^2) = \operatorname{Var}(\hat\mu^{\text{obs}})(1 - \rho_t^2).
     \end{align}

     For $K \le n-1$, we have that $|U^t_i| \ge 2$ which implies that $\frac{|U^t_i|}{|U^t_i| - 1} \le 2$. Thus:
     
     \begin{align*}
     \text{Var}(\hat\mu^t_i | \mathcal{F}_{t-1}) \le 
     \frac{2(n-1)}{nK}\operatorname{Var}(\hat\mu^{\text{obs}})(1 - \rho_t^2) \le \frac{2}{K}\operatorname{Var}(\hat\mu^{\text{obs}})(1 - \rho_t^2)
     \end{align*}

     As such, for $K \ge 2$, $\text{Var}(\hat\mu^t_i | \mathcal{F}_{t-1}) \le \operatorname{Var}(\hat\mu^{\text{obs}})(1 - \rho_t^2)$. Combining this with~\eqref{eq:lemma_2_K_1} yields that for any $K \in [1, n-1]$, $\text{Var}(\hat\mu^t_i | \mathcal{F}_{t-1}) \le \operatorname{Var}(\hat\mu^{\text{obs}})(1 - \rho_t^2)$

\end{proof}

\begin{lemma}
\label{lem:cond_var_bound}
For every $t \ge 1$ and $i \in [m]$, the conditional variance of $S_{i,j_t}$ given $\mathcal{F}_{t-1}$ is bounded by
\begin{equation}
\operatorname{Var}(S_{i,j_t} \mid \mathcal{F}_{t-1}) \;\le\; \frac{n}{|U_i^t|}\cdot\operatorname{Var}(S_i).
\end{equation}
\end{lemma}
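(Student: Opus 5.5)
Conditionally on $\mathcal{F}_{t-1}$, the index $j_t$ is uniform on $U_i^t$. So $S_{i,j_t}$ is a draw from the finite population $\{S_{i,j}: j\in U_i^t\}$, and its conditional variance is the population variance of the unobserved entries:
\[
\operatorname{Var}(S_{i,j_t}\mid\mathcal{F}_{t-1}) = \frac{1}{|U_i^t|}\sum_{j\in U_i^t}\bigl(S_{i,j}-\bar S_{U}\bigr)^2, \qquad \bar S_U = \frac{1}{|U_i^t|}\sum_{j\in U_i^t} S_{i,j}.
\]
The plan is to compare this sub-population variance with the full finite-population variance $\operatorname{Var}(S_i)=\frac1n\sum_{j=1}^n (S_{i,j}-\mu_i)^2$. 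The argument is entirely deterministic once we condition on $\mathcal{F}_{t-1}$, since $U_i^t$ is $\mathcal{F}_{t-1}$-measurable.

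The key step is the variational characterization of the mean. For any constant $c$ the sum $\sum_{j\in U}(S_{i,j}-c)^2$ is minimized at $c=\bar S_U$, so
\[
\sum_{j\in U_i^t}\bigl(S_{i,j}-\bar S_U\bigr)^2 \le \sum_{j\in U_i^t}\bigl(S_{i,j}-\mu_i\bigr)^2 .
\]
Because every term is nonnegative, the right-hand side is at most $\sum_{j=1}^n (S_{i,j}-\mu_i)^2 = n\operatorname{Var}(S_i)$. Dividing by $|U_i^t|$ gives $\operatorname{Var}(S_{i,j_t}\mid\mathcal{F}_{t-1})\le \frac{n}{|U_i^t|}\operatorname{Var}(S_i)$, which is the claim.

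No step here is a real obstacle. The only point needing care is to state explicitly that uniform sampling from $U_i^t$ given $\mathcal{F}_{t-1}$ identifies the conditional law of $S_{i,j_t}$ with the empirical distribution on $U_i^t$, including in the batch variant, where each marginal draw is still uniform. The rest is the two elementary inequalities above. The bound is valid whenever $|U_i^t|\ge 1$, which the algorithm guarantees because it only selects models with $|O_i|<n$.
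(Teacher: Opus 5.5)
Your proof is correct and follows essentially the same route as the paper. You identify the conditional variance with the population variance over $U_i^t$, replace the subset mean by $\mu_i$, and then extend the sum to all of $[n]$. The only difference is cosmetic: you justify the second step by the variational characterization of the mean, while the paper writes out the equivalent identity $\sum_{j\in U_i^t}(S_{i,j}-\bar S_{U_i^t})^2=\sum_{j\in U_i^t}(S_{i,j}-\mu_i)^2-|U_i^t|(\bar S_{U_i^t}-\mu_i)^2$ and drops the nonnegative term.
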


\begin{proof}
Let $\bar S_{U_i^t} := \frac{1}{|U_i^t|}\sum_{j \in U_i^t} S_{i,j}$ denote the empirical mean of the scores of model $i$ over the currently unobserved examples. Since $j_t$ is sampled uniformly from $U_i^t$, we have
\[
\operatorname{Var}(S_{i,j_t} \mid \mathcal{F}_{t-1}) = \frac{1}{|U_i^t|}\sum_{j \in U_i^t}\bigl(S_{i,j} - \bar S_{U_i^t}\bigr)^2.
\]
Using the identity
\[
\sum_{j \in U_i^t}\bigl(S_{i,j} - \bar S_{U_i^t}\bigr)^2 = \sum_{j \in U_i^t}(S_{i,j} - \mu_i)^2 - |U_i^t|\bigl(\bar S_{U_i^t} - \mu_i\bigr)^2,
\]
and dropping the non-negative second term, we obtain
\[
\operatorname{Var}(S_{i,j_t} \mid \mathcal{F}_{t-1}) \le \frac{1}{|U_i^t|}\sum_{j \in U_i^t}(S_{i,j} - \mu_i)^2.
\]
Since each summand is non-negative, extending the sum to all $j \in [n]$ only increases it:
\[
\frac{1}{|U_i^t|}\sum_{j \in U_i^t}(S_{i,j} - \mu_i)^2 \le \frac{1}{|U_i^t|}\sum_{j=1}^{n}(S_{i,j} - \mu_i)^2 = \frac{n}{|U_i^t|}\cdot \operatorname{Var}(S_i),
\]
which completes the proof.
\end{proof}

\subsection{Proof of~\autoref{thm:bandits}}
\begin{theorem}
Let $i^*=\arg\max_{i\in[m]}\mu_i$ be the unique best model, and define
$\Delta_i=\mu_{i^*}-\mu_i$ for $i\neq i^*$ and $\Delta_{i^*} = \min_{i \ne i^{*}} \Delta_i$. Let $H_1=\sum_{i}\Delta_i^{-2}.$ Let $\hat i_T$ be the output of Algorithm~\ref{alg:ucb-e-pp} when executed with a budget $T$, and let $a$ satisfy $0\le a\le \frac{25(T-m)}{36H_1}$. Assume that the constant $v$ in
Theorem~\ref{thm:concentration_ineq} can be chosen uniformly over all models
$i\in[m]$ and all pull counts $K\le T$. Then
\[
\mathbb P(\hat i_T=i^*)
\ge
1-\sum_{i=1}^m\sum_{K=1}^T 2\exp\left(
-\frac{\epsilon_K^2}{
2\left(
\frac{v}{K n^2}
+
\frac{2\epsilon_K}{3K}
\right)}
\right), \quad \epsilon_K=\frac{1}{5}\sqrt{\frac{a}{K}}.
\]
\end{theorem}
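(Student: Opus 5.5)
The plan is to follow the Audibert--Bubeck--Munos analysis of UCB-E. Theorem~\ref{thm:concentration_ineq} will play the role of Hoeffding's inequality.

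\textbf{Good event.} Define
\[
\xi=\bigl\{\forall i\in[m],\ \forall K\in[T]:\ |\hat\mu_{i,K}-\mu_i|<\tfrac15\sqrt{a/K}\bigr\},
\]
where $\hat\mu_{i,K}$ is the running estimator of model $i$ after its $K$-th selection. To apply Theorem~\ref{thm:concentration_ineq} for a fixed pair $(i,K)$, I index by pull count rather than by time. Concretely, I evaluate the padded martingale $\sum_s \tilde D_i^s$ at the (bounded) stopping time of the $K$-th selection of $i$, or equivalently re-index the martingale differences by pull number. The Freedman argument in that proof goes through unchanged, because the selection indicators are $\mathcal F_{s-1}$-measurable. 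Using the assumed uniform $v$, each pair contributes at most $2\exp(-\epsilon_K^2/(2(v/(Kn^2)+2\epsilon_K/(3K))))$. A union bound over $i\le m$ and $K\le T$ then gives exactly the complement term in the statement. It remains to show that $\xi$ implies $\hat i_T=i^*$.

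\textbf{Deterministic argument on $\xi$.} Let $T_i(t)$ be the pull count of model $i$, and let $B_{i,K}=\hat\mu_{i,K}+\sqrt{a/K}$ be its index. On $\xi$ I will prove two bounds:
\begin{enumerate}
\item[(a)] every suboptimal $i$ satisfies $T_i\le \frac{36}{25}a\Delta_i^{-2}$ (+1 for the pull that triggers the bound);
\item[(b)] $i^*$ is pulled at least on the order of $a\Delta_{i^*}^{-2}$ times.
\end{enumerate}

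For (a), take a suboptimal $i$ that is selected. Its index must be at least $B_{i^*}$, and on $\xi$ we have $B_{i^*}>\mu_{i^*}$. We also have $B_{i,K}<\mu_i+\frac65\sqrt{a/K}$. Together these force $\frac65\sqrt{a/K}>\Delta_i$, i.e.\ $K<\frac{36a}{25\Delta_i^2}$.

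For (b), the hypothesis $a\le \frac{25(T-m)}{36H_1}$ gives
\[
\sum_{i\ne i^*}\tfrac{36a}{25\Delta_i^2}\le T-m-\tfrac{36a}{25\Delta_{i^*}^2}.
\]
So by pigeonhole the remaining budget, at least $\frac{36a}{25\Delta_{i^*}^2}$ pulls, goes to $i^*$. This is the role of including $\Delta_{i^*}$ in $H_1$.

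With $T_i$ and $T_{i^*}$ controlled, I conclude from $\xi$ that
\[
\hat\mu_i<\mu_i+\tfrac15\sqrt{a/T_i}\le \mu_{i^*}-\tfrac15\sqrt{a/T_{i^*}}<\hat\mu_{i^*}.
\]
This uses the bounds $\sqrt{a/T_i}\gtrsim\frac56\Delta_i$, coming from the last pull of $i$, and $\sqrt{a/T_{i^*}}\lesssim\frac56\Delta_{i^*}$. Here I would reproduce the constant bookkeeping of Audibert et al.\ ($1/5$ versus $6/5$). Adapting it to batches ($B$ calls per step) and to the $mBT_0$ warm-up is a rescaling of $T$.

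\textbf{Main obstacle.} The delicate step is the concentration at the random pull count, i.e.\ turning the fixed-$(t,K)$ statement of Theorem~\ref{thm:concentration_ineq} into a statement about the $K$-th pull. I would handle it by the optional-stopping/re-indexing argument above, noting that Freedman's inequality holds at any bounded stopping time. A second subtlety is the lower bound in (b). The standard argument needs care: it must show that, once all suboptimal arms are saturated, $i^*$'s index keeps it selected, and the final comparison of empirical means must hold at the end time. I expect to mimic the original proof's case analysis, which shows that on $\xi$ each suboptimal arm is both upper- and lower-bounded in pulls.
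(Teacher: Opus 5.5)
\textbf{Overall.} Your probabilistic part matches the paper's proof. The paper also defines the good event through the $K$-th selection time $\tau_i^{(K)}$, counting $\{\tau_i^{(K)}>T\}$ as good. It applies Freedman to the stopped martingale $S^{(i)}_{t\wedge\tau_i^{(K)}}$ at the deterministic time $T$ and takes a union bound over $(i,K)$. For the deterministic implication $\xi\Rightarrow\hat i_T=i^*$ it simply cites Audibert et al.

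\textbf{The gap.} The problem is in your reconstruction of that deterministic step. Your final chain $\hat\mu_i<\mu_i+\tfrac15\sqrt{a/T_i}\le\mu_{i^*}-\tfrac15\sqrt{a/T_{i^*}}$ needs an \emph{upper} bound on $\sqrt{a/T_i}$. That is, it needs a \emph{lower} bound on the number of pulls of every suboptimal arm. The bound you cite, $\sqrt{a/T_i}\ge\tfrac56\Delta_i$ from the last pull of $i$, is just the upper bound on $T_i$ from (a), so it points the wrong way. Nothing in (a) or (b) stops a suboptimal arm from being pulled only a handful of times. In that case $\xi$ only gives $\hat\mu_i<\mu_i+\tfrac15\sqrt{a/T_i}$, and this large right-hand side can exceed $\hat\mu_{i^*}$.

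\textbf{Missing ingredient.} What you need is the ``balance'' property from the UCB-E analysis.
\begin{enumerate}
\item Let $t$ be the last time $i^*$ is selected. Then $B_{i^*}\ge B_i$ at that time.
\item On $\xi$ this gives $\tfrac45\sqrt{a/T_i(t-1)}<\tfrac65\sqrt{a/T_{i^*}(t-1)}$. Hence $T_i(T)\ge T_i(t-1)>\tfrac49\bigl(T_{i^*}(T)-1\bigr)$.
\item Your pigeonhole step (b), with the $+1$ terms from (a) kept, actually gives $T_{i^*}(T)-1\ge\frac{36a}{25\Delta_{i^*}^2}$.
\item Combining, $T_i(T)>\frac{16a}{25\Delta_{i^*}^2}\ge\frac{16a}{25\Delta_i^2}$, since $\Delta_{i^*}\le\Delta_i$. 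This is exactly where including $\Delta_{i^*}$ as the minimal gap matters.
\item Therefore $\tfrac15\sqrt{a/T_i}<\tfrac14\Delta_i$ and $\tfrac15\sqrt{a/T_{i^*}}<\tfrac16\Delta_i$, which closes your chain.
\end{enumerate}
Your last paragraph mentions that suboptimal arms must be lower-bounded in pulls, but the argument you wrote does not supply this. Either add the balance step above or, as the paper does, cite the UCB-E analysis for the implication on $\xi$ outright.
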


\begin{proof}
For each $i$ and $K\ge 1$, let $\tau_i^{(K)} := \inf\{t : |A_i(t)| = K\}$
denote the wall-clock time at which arm $i$ is selected for the $K$-th time
(with $\tau_i^{(K)} = \infty$ if this never occurs). Since $\hat\mu_i^t$ is
piecewise constant in $t$ between consecutive pulls of arm $i$, the running
estimator at any $t\in[T]$ with $|A_i(t)| = K$ equals
$\hat\mu_i^{\tau_i^{(K)}}$.

For every $1 \le i \le m, 1 \le K \le T$ Consider the events:  
\[
\xi_{i,K} \;=\; \left\{\tau_i^{(K)} > T\right\} \,\cup\, \left\{|\hat\mu_i^{\tau_i^{(K)}} - \mu_i| < \tfrac{1}{5}\sqrt{a/K}\right\},
\qquad \xi = \bigcap_{i=1}^m \bigcap_{K=1}^T \xi_{i,K}.
\]

 From \cite{audibert2010best} we know that for every $0 \le a \le \frac{25 (T-m)}{36H_1}$ on $\xi$ we have that $\hat{i}_T = i^*$. To conclude the proof we show that
 $P(\xi) \ge 1 - \sum_{K=1}^T\sum_{i=1}^m\alpha_{i,K}$.
Fix $i$ and $K$. From~\autoref{thm:concentration_ineq} we have that:
    $$
        P\bigl(|\hat\mu_i^t - \mu_i| \ge \epsilon,\ |A_i(t)| = K\bigr)
\;\le\;
2\exp\!\left(-\frac{\epsilon^2}{2\bigl(\frac{v}{Kn^2} + \frac{2\epsilon}{3K}\bigr)}\right).
    $$

To control $\xi_{i,K}^c$, apply the argument of Theorem~\ref{thm:concentration_ineq}
to the stopped martingale $S_t^{(i,K)} := S_{t\wedge\tau_i^{(K)}}^{(i)}$.
Stopping preserves the martingale property, the increment bound
$|\Delta S_t^{(i,K)}|\le 2$, and the predictable quadratic variation bound
$\langle S^{(i,K)}\rangle_T \le K v/n^2$. On $\{\tau_i^{(K)} \le T\}$,
$S_T^{(i,K)} = K(\hat\mu_i^{\tau_i^{(K)}} - \mu_i)$, so Freedman applied at the
deterministic time $T$ gives
\[
P\bigl(\tau_i^{(K)}\le T,\ |\hat\mu_i^{\tau_i^{(K)}} - \mu_i| \ge \epsilon_K\bigr)
\;\le\; \alpha_{i,K}.
\]
Thus:

\[
 P\left( \xi_{i,K}\right)  \ge 1 -  \alpha_{i,K}
\]

where $\alpha_{i,K} = 2\exp\left(-\frac{\epsilon^2}{2(\frac{v}{Kn^2} + \frac{2}{K}\epsilon/3)}\right)$

Taking the union bound over $\xi_{i,K}$ we get $P(\xi) \ge 1 - \sum_{K=1}^T\sum_{i=1}^m\alpha_{i,K}$.

\end{proof}

\subsection{Derivation of \texorpdfstring{$\hat{\lambda}^k_i$}{lambda-hat-i-k}}

Our goal is to find the weighting parameter $\lambda^k_i$ that minimizes the approximate optimization objective:
\begin{equation}
    \mathbb{E}\left[ \left( Z_{i}^k - \bar Z_i^{<k} \right)^2 \Biggm| \mathcal{F}_{k-1} \right],
\end{equation}
where $\bar Z_i^{<k}$ is a predictable proxy for $\mathbb{E}[Z_i^k\mid\mathcal{F}_{k-1}]$ formed from past residuals. To simplify notation, let $n_k = |U^{k-1}_i|$ be the number of unobserved examples for model $i$ at step $k$, and define the historical residual mean as the average over the steps at which model $i$ was previously selected,
\[
\bar Z_i^{<k}
\;=\;
\frac{1}{|A_i(k-1)|}\sum_{s\in A_i(k-1)} Z_i^s,
\]
matching the convention used in Step~3 of the main text. Since $A_i(k-1)$ and the residuals $\{Z_i^s : s\in A_i(k-1)\}$ are
$\mathcal{F}_{k-1}$-measurable, $\bar Z_i^{<k}$ is treated as a constant
throughout the conditional expectation below.

By expanding the squared difference and adding/subtracting $\mathbb{E}[Z^k_i \mid \mathcal{F}_{k-1}]$, we can decompose the objective into variance and bias components:
\begin{align}
\mathbb{E}\left[ \left( Z_{i}^k - \bar{Z}_i^{<k} \right)^2 \Biggm| \mathcal{F}_{k-1} \right] &= \mathbb{E}\left[ \left( Z_i^k - \mathbb{E}[Z^k_i \mid \mathcal{F}_{k-1}] + \mathbb{E}[Z^k_i \mid \mathcal{F}_{k-1}] - \bar{Z}_i^{<k} \right)^2 \Biggm| \mathcal{F}_{k-1} \right] \nonumber \\
&= \text{Var}(Z^k_i \mid \mathcal{F}_{k-1}) \;+\; \left(\mathbb{E}[Z^k_i \mid \mathcal{F}_{k-1}] - \bar{Z}_i^{<k}\right)^2, \label{eq:mse_decomposition}
\end{align}
where the cross-term vanishes because $\mathbb{E}[(Z_i^k - \mathbb{E}[Z^k_i \mid \mathcal{F}_{k-1}]) \mid \mathcal{F}_{k-1}] = 0$ and the remaining terms are $\mathcal{F}_{k-1}$ measurable.

\paragraph{Variance and Expectation Proxies.}
Because $j_k$ is sampled uniformly from $U^{k-1}_i$, the inclusion probability is $\pi^k_i = 1/n_k$. The true conditional variance is:
\begin{align*}
\text{Var}(Z^k_i \mid \mathcal{F}_{k-1}) &= n_k^2 \, \text{Var}(S_{i,j_k} - \lambda^k_i \hat{S}^k_{i, j_k}\mid \mathcal{F}_{k-1}) \\
&= n_k \sum_{j \in U^{k-1}_i} \left( (S_{i,j}-\lambda^k_i \hat{S}^k_{i, j}) - \frac{1}{n_k}\sum_{j' \in U^{k-1}_i} (S_{i,j'}-\lambda^k_i \hat{S}^k_{i, j'}) \right)^2.
\end{align*}
Since the true scores $S_{i,j}$ for $j \in U^{k-1}_i$ are unknown, we substitute them with the predictable plug-in estimates $\hat{S}^k_{i,j}$. The term $(S_{i,j} - \lambda^k_i \hat{S}^k_{i,j})$ is therefore approximated by $(1 - \lambda^k_i)\hat{S}^k_{i,j}$. This yields the variance proxy:
\begin{equation}
\widetilde{\text{Var}}(Z^k_i \mid \mathcal{F}_{k-1}) \;=\; n_k (1 - \lambda^k_i)^2 \sum_{j \in U^{k-1}_i} (\hat{S}^k_{i,j} - \bar{S}^k_i)^2 \;=\; n_k (1-\lambda^k_i)^2 \mathrm{SS}^k_i,
\end{equation}
where $\bar{S}^k_i = \frac{1}{n_k} \sum_{j \in U^{k-1}_i} \hat{S}^k_{i,j}$ and $\text{SS}^k_i := \sum_{j \in U^{k-1}_i} \left(\hat S^k_{i,j} - \bar S^k_i\right)^2 $

Similarly, we approximate the predictable expectation. The true conditional expectation is $\mathbb{E}[Z^k_i \mid \mathcal{F}_{k-1}] = \sum_{j \in U^{k-1}_i} (S_{i,j} - \lambda^k_i \hat{S}^k_{i,j})$. Substituting $S_{i,j} \approx \hat{S}^k_{i,j}$, the predictable expectation proxy becomes:
\begin{equation}
\widetilde{\mathbb{E}}[Z^k_i \mid \mathcal{F}_{k-1}] \;=\; (1-\lambda^k_i) \sum_{j \in U^{k-1}_i} \hat{S}^k_{i,j} \;=\; (1-\lambda^k_i) F^k_i.
\end{equation}

\paragraph{Closed form for $\hat\lambda^k_i$.}
Plugging these proxies back into Eq.~\eqref{eq:mse_decomposition}, our approximate objective becomes:
\begin{align*}
&\widetilde{\text{Var}}(Z^k_i \mid \mathcal{F}_{k-1}) \;+\; \left(\widetilde{\mathbb{E}}[Z^k_i \mid \mathcal{F}_{k-1}] - \bar{Z}_i^{<k}\right)^2
= n_k (1-\lambda^k_i)^2 \mathrm{SS}^k_i \;+\; \left( (1-\lambda^k_i)F^k_i - \bar{Z}_i^{<k} \right)^2 \\
&= (1-\lambda^k_i)^2 \left( n_k \mathrm{SS}^k_i + (F^k_i)^2 \right) - 2(1-\lambda^k_i)F^k_i \bar{Z}_i^{<k} + (\bar{Z}_i^{<k})^2.
\end{align*}
By applying the algebraic identity from Lemma~\ref{lem:algebric identity}, we know that $n_k \mathrm{SS}^k_i + (F^k_i)^2 = n_k \Phi^k_i$, where $\Phi^k_i = \sum_{j \in U^{k-1}_i} (\hat{S}^k_{i,j})^2$. Thus, the objective simplifies to:
\begin{equation}
(1-\lambda^k_i)^2 n_k \Phi^k_i - 2(1-\lambda^k_i)F^k_i \bar{Z}_i^{<k} + (\bar{Z}_i^{<k})^2.
\end{equation}
This is a simple quadratic in $(1-\lambda^k_i)$. Taking the derivative with respect to $(1-\lambda^k_i)$ and setting it to zero yields:
\begin{equation}
2(1-\lambda^k_i)n_k \Phi^k_i - 2F^k_i \bar{Z}_i^{<k} = 0 \quad \implies \quad 1-\lambda^k_i = \frac{F^k_i \bar{Z}_i^{<k}}{n_k \Phi^k_i}.
\end{equation}
Solving for $\lambda^k_i$ and projecting the result back into the valid range $[0, 1]$ yields the final closed-form update rule used in the algorithm:
\begin{equation}
\hat{\lambda}_{i}^k = \text{clip}\left( 1 - \frac{F^k_i \bar{Z}_i^{<k}}{n_k \Phi^k_i}, \; 0, \; 1\right),
\end{equation}
which matches the formula presented in Step 3.

\begin{lemma}
\label{lem:algebric identity}
Let $U^{k-1}_i$ be a finite index set of size $n := |U^{k-1}_i|$, and let $\hat S^k_{i,j}$ be a real-valued sequence indexed by $j \in U^{k-1}_i$. Define
\[
F^k_i \;:=\; \sum_{j \in U^{k-1}_i} \hat S^k_{i,j}, \qquad \Phi^k_i \;:=\; \sum_{j \in U^{k-1}_i} (\hat S^k_{i,j})^2, \qquad \bar S^k_i \;:=\; \frac{F^k_i}{n}.
\]
Define the sum of squared deviations
\[
\mathrm{SS}^k_i \;:=\; \sum_{j \in U^{k-1}_i} \bigl(\hat S^k_{i,j} - \bar S^k_i\bigr)^2.
\]
Then
\begin{equation}
n \cdot \mathrm{SS}^k_i \;=\; n\, \Phi^k_i \;-\; (F^k_i)^2.
\label{eq:ss-identity}
\end{equation}
\end{lemma}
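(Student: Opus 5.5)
The plan is to prove the identity \eqref{eq:ss-identity} by expanding the square in the definition of $\mathrm{SS}^k_i$ and collecting terms. This is the standard ``computational formula'' for the sum of squared deviations, so no earlier result from the paper is needed; only the definitions of $F^k_i$, $\Phi^k_i$ and $\bar S^k_i = F^k_i/n$ enter.

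First, for each $j \in U^{k-1}_i$, expand
\[
\bigl(\hat S^k_{i,j} - \bar S^k_i\bigr)^2 = (\hat S^k_{i,j})^2 - 2\bar S^k_i \hat S^k_{i,j} + (\bar S^k_i)^2 .
\]
Summing over the $n$ indices in $U^{k-1}_i$ and using that $\bar S^k_i$ does not depend on $j$ gives
\[
\mathrm{SS}^k_i = \Phi^k_i - 2\bar S^k_i F^k_i + n(\bar S^k_i)^2 .
\]
Next, substitute $\bar S^k_i = F^k_i/n$. This turns the cross term into $-2(F^k_i)^2/n$ and the last term into $(F^k_i)^2/n$, so that $\mathrm{SS}^k_i = \Phi^k_i - (F^k_i)^2/n$. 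Multiplying through by $n$ then yields $n\,\mathrm{SS}^k_i = n\Phi^k_i - (F^k_i)^2$, which is the claim.

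No step here is a real obstacle. The only points that need care are bookkeeping. In this lemma $n$ is redefined to mean $|U^{k-1}_i|$, which plays the role of $n_k$ in the derivation of $\hat\lambda^k_i$ and must not be confused with the dataset size. We also need $n \ge 1$ so that $\bar S^k_i$ is well defined, which holds whenever the arm is still eligible for selection, since then $|O_i| < n$ in the algorithm.
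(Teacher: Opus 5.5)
Your proof is correct and matches the paper's argument step for step. Like the paper, you expand the square, sum over $U^{k-1}_i$ using that $\bar S^k_i$ is constant in $j$, substitute $\bar S^k_i = F^k_i/n$ to get $\mathrm{SS}^k_i = \Phi^k_i - (F^k_i)^2/n$, and multiply by $n$.
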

 
\begin{proof}
Expand the squared deviation:
\[
\bigl(\hat S^k_{i,j} - \bar S^k_i\bigr)^2 \;=\; (\hat S^k_{i,j})^2 \;-\; 2 \hat S^k_{i,j} \bar S^k_i \;+\; (\bar S^k_i)^2.
\]
Sum over $j \in U^{k-1}_i$:
\[
\mathrm{SS}^k_i \;=\; \sum_{j \in U^{k-1}_i} (\hat S^k_{i,j})^2 \;-\; 2 \bar S^k_i \sum_{j \in U^{k-1}_i} \hat S^k_{i,j} \;+\; \sum_{j \in U^{k-1}_i} (\bar S^k_i)^2.
\]
Identify each term:
\begin{itemize}
    \item $\sum_{j \in U^{k-1}_i} (\hat S^k_{i,j})^2 = \Phi^k_i$ by definition.
    \item $\sum_{j \in U^{k-1}_i} \hat S^k_{i,j} = F^k_i$, so the middle term is $-2 \bar S^k_i \, F^k_i$.
    \item $\sum_{j \in U^{k-1}_i} (\bar S^k_i)^2 = n (\bar S^k_i)^2$ since $\bar S^k_i$ is constant in $j$ and the sum has $n$ terms.
\end{itemize}
Therefore
\[
\mathrm{SS}^k_i \;=\; \Phi^k_i \;-\; 2 \bar S^k_i \, F^k_i \;+\; n (\bar S^k_i)^2.
\]
Substituting $\bar S^k_i = F^k_i / n$:
\[
\mathrm{SS}^k_i \;=\; \Phi^k_i \;-\; 2 \cdot \frac{F^k_i}{n} \cdot F^k_i \;+\; n \cdot \frac{(F^k_i)^2}{n^2} \;=\; \Phi^k_i \;-\; \frac{2 (F^k_i)^2}{n} \;+\; \frac{(F^k_i)^2}{n} \;=\; \Phi^k_i \;-\; \frac{(F^k_i)^2}{n}.
\]
Multiplying both sides by $n$ yields \eqref{eq:ss-identity}.
\end{proof}

\clearpage

\section{Additional Results}
\label{appendix:additional_results}

~\autoref{fig:effective_budget_mmlu} report the effective budget compared to UCB-E, defined as the number of LLM calls required by each method to achieve the level of accuracy achieved by UCB-E when called with Budget B. As can be seen, across all test settings, PULSE requires fewer evaluations than UCB-E to reach the same accuracy level.

\autoref{fig:accuracy_vs_budget_appendix} (b) and (d) show the Accuracy@$t$ curve of Bench 1 and 2, respectively (i.e., when evaluated on the entire datasets). \autoref{fig:accuracy_vs_budget_appendix} (a) and (c) show the Accuracy@$t$ curve for a subset of $1$K models selected from Bench 1 and 2, respectively, so that $\Delta_{\text{top2}}=0.01$.

\begin{figure*}
\centering
\begin{subfigure}[t]{0.6\textwidth}
    \centering
\includegraphics[width=\textwidth]{res_bbh_gpqa/legend.pdf}
\end{subfigure}
\\
\begin{subfigure}[t]{0.45\textwidth}
    \centering
    \includegraphics[width=\textwidth]{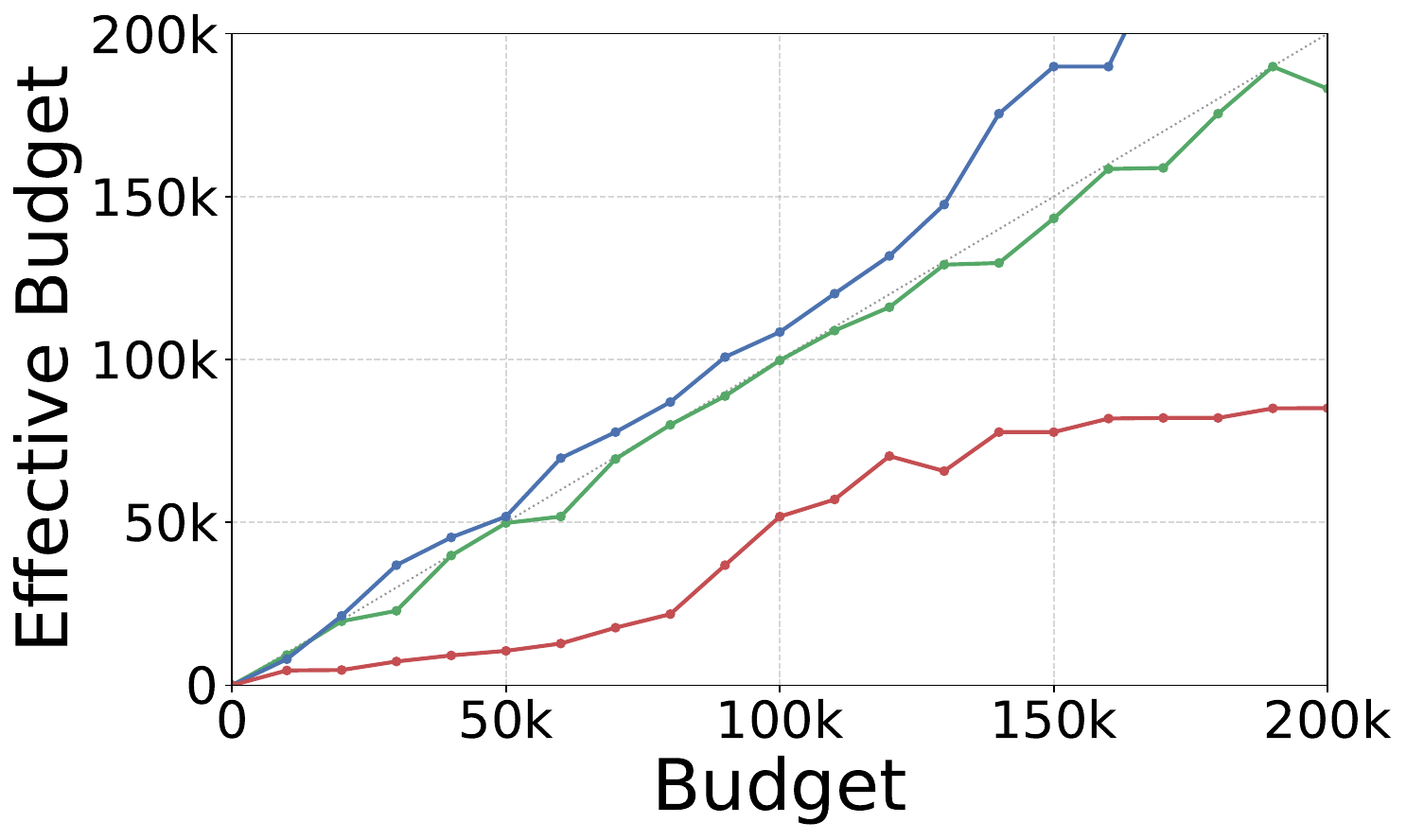}
    \subcaption{Bench 1(Full test set)}    
\end{subfigure}
\begin{subfigure}[t]{0.45\textwidth}
    \centering
    \includegraphics[width=\textwidth]{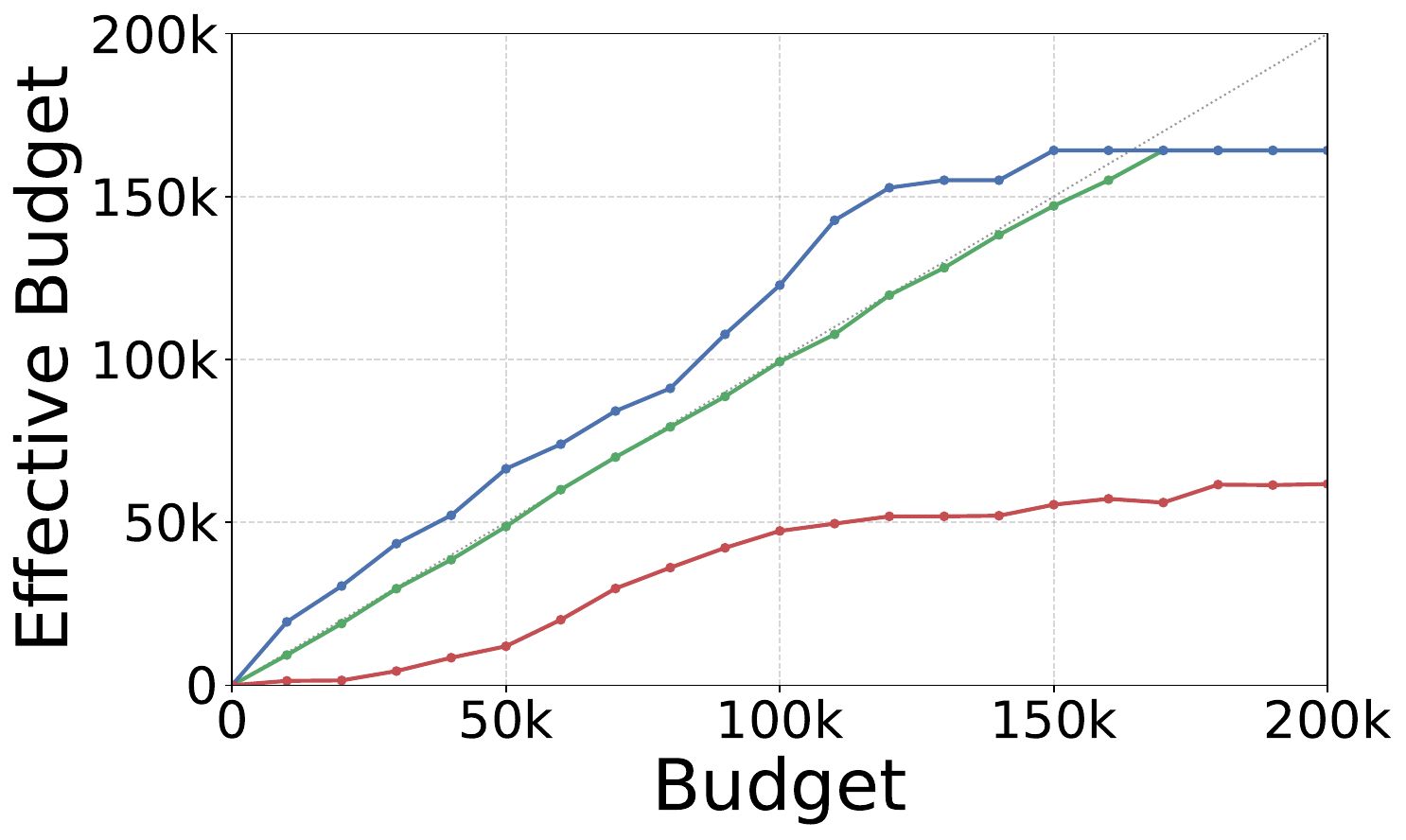}
    \subcaption{Bench 1 ($\Delta_{\text{top2}}=0.01$)}
\end{subfigure}
\\
\begin{subfigure}[t]{0.45\textwidth}
    \centering
    \includegraphics[width=\textwidth]{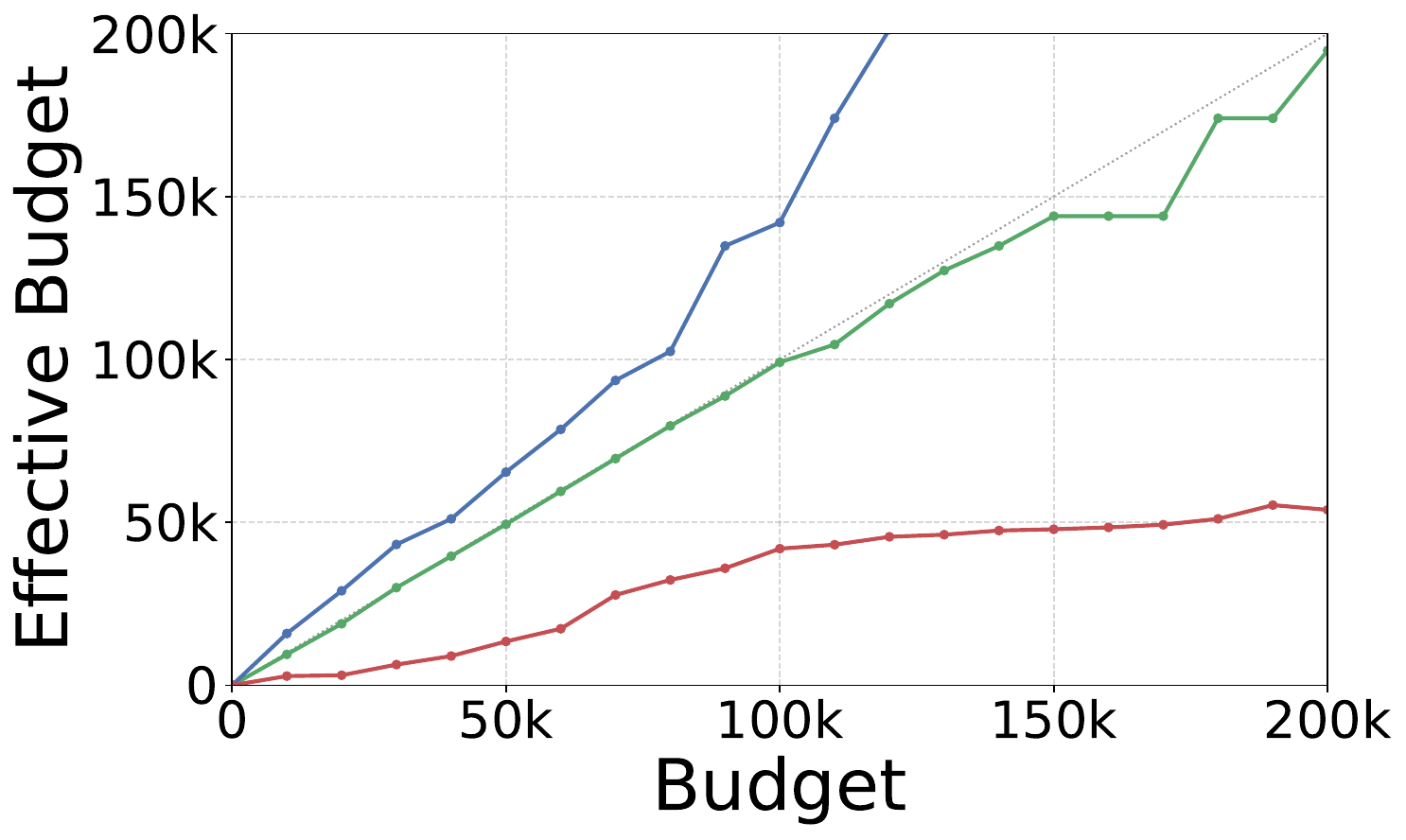}
    \subcaption{Bench 1 ($\Delta_{\text{top2}}=0.02$)}
\end{subfigure}
\begin{subfigure}[t]{0.45\textwidth}
    \centering
    \includegraphics[width=\textwidth]{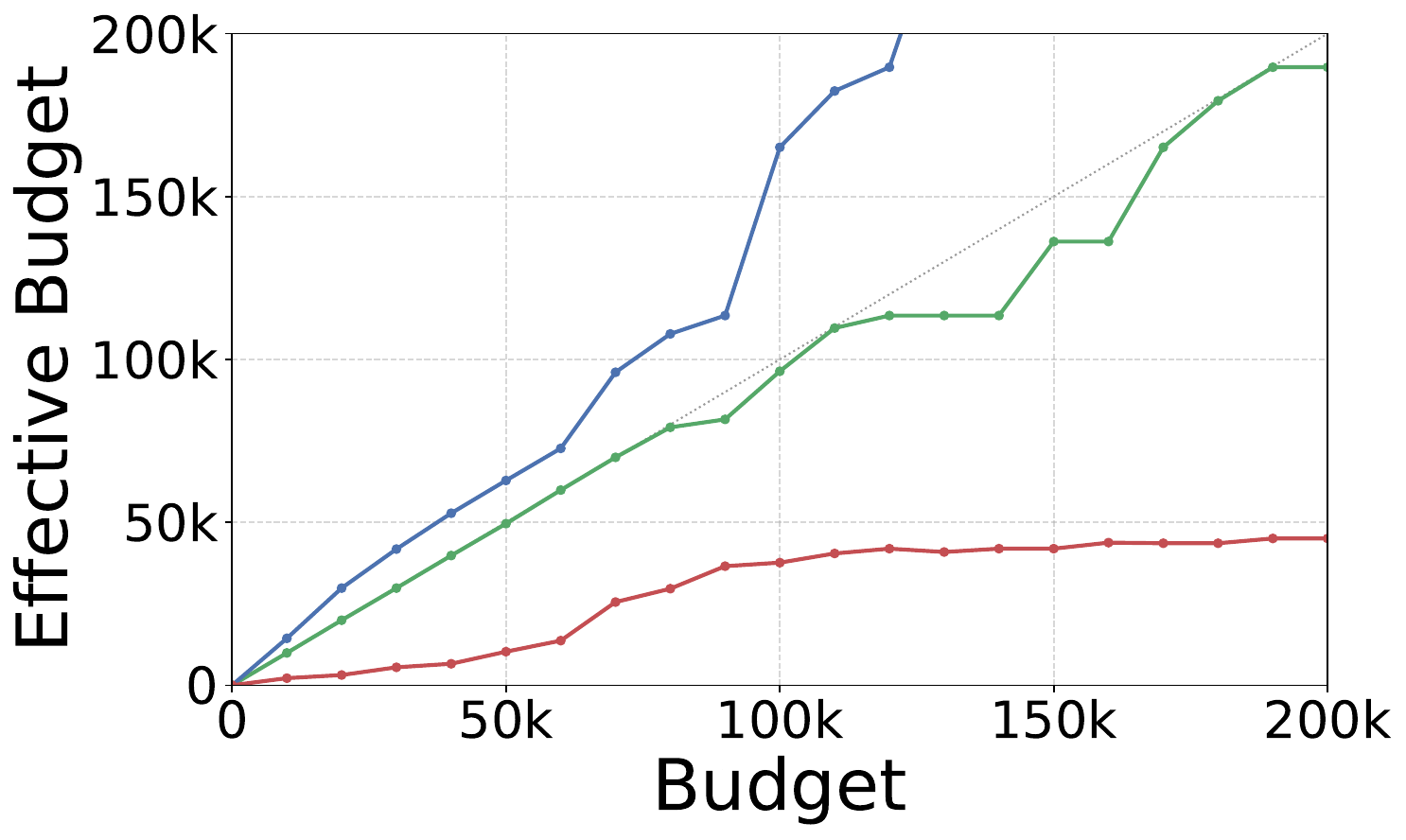}
    \subcaption{Bench 1 ($\Delta_{\text{top2}}=0.03$)}
\end{subfigure}
\begin{subfigure}[t]{0.45\textwidth}
    \centering
    \includegraphics[width=\textwidth]{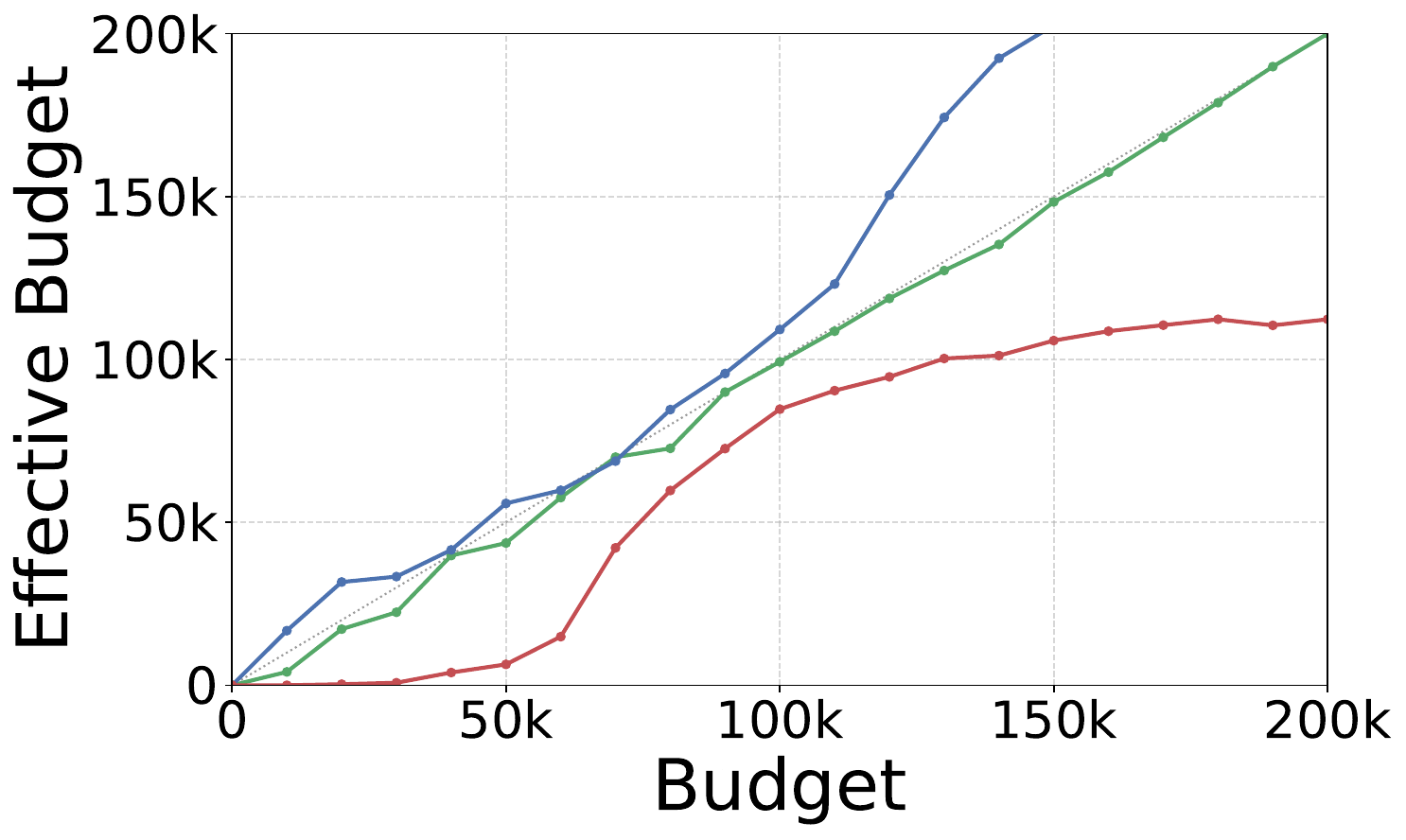}
    \subcaption{Bench 2(Full test set)}    
\end{subfigure}
\begin{subfigure}[t]{0.45\textwidth}
    \centering
    \includegraphics[width=\textwidth]{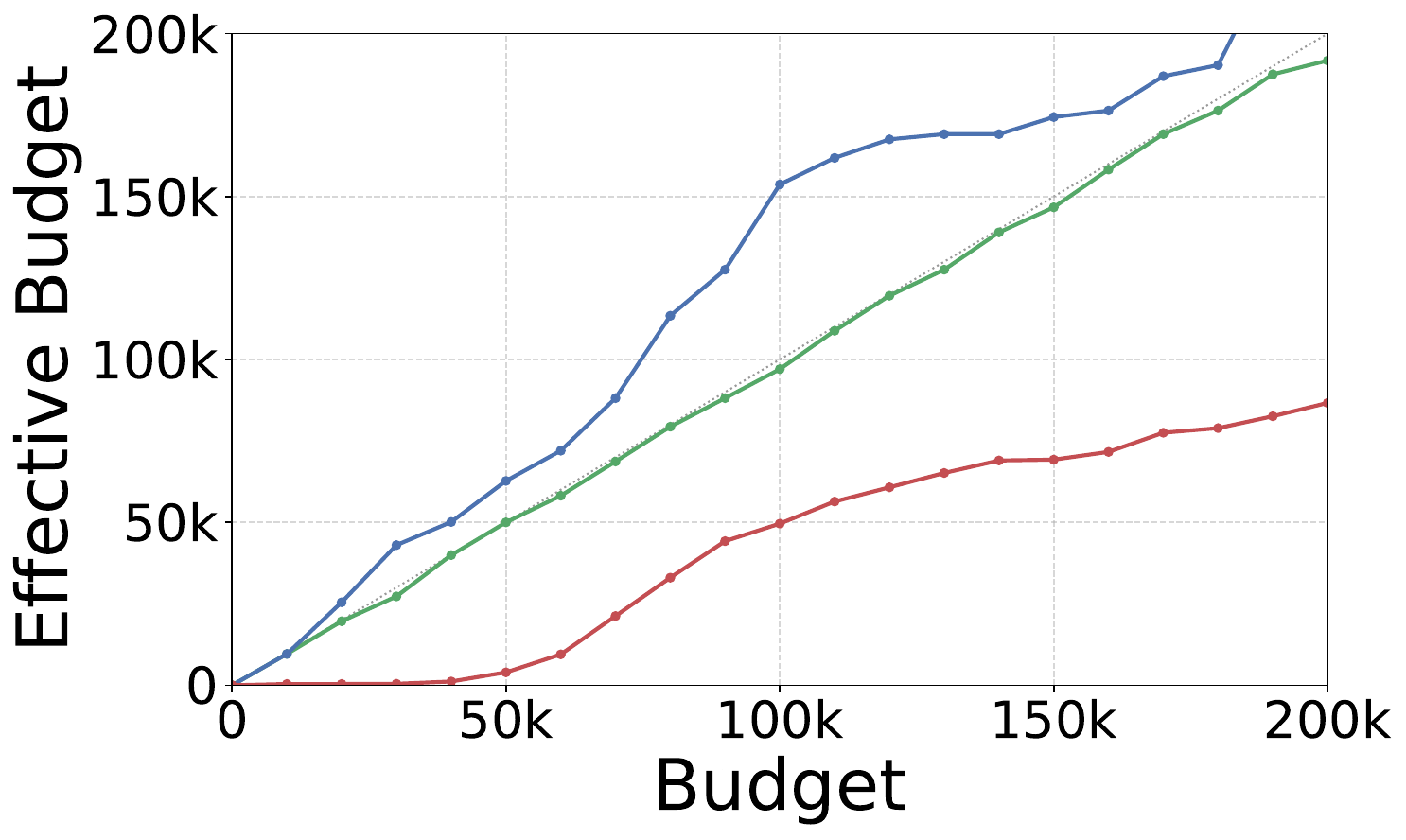}
    \subcaption{Bench 2 ($\Delta_{\text{top2}}=0.01$)}
\end{subfigure}
\\
\begin{subfigure}[t]{0.45\textwidth}
    \centering
    \includegraphics[width=\textwidth]{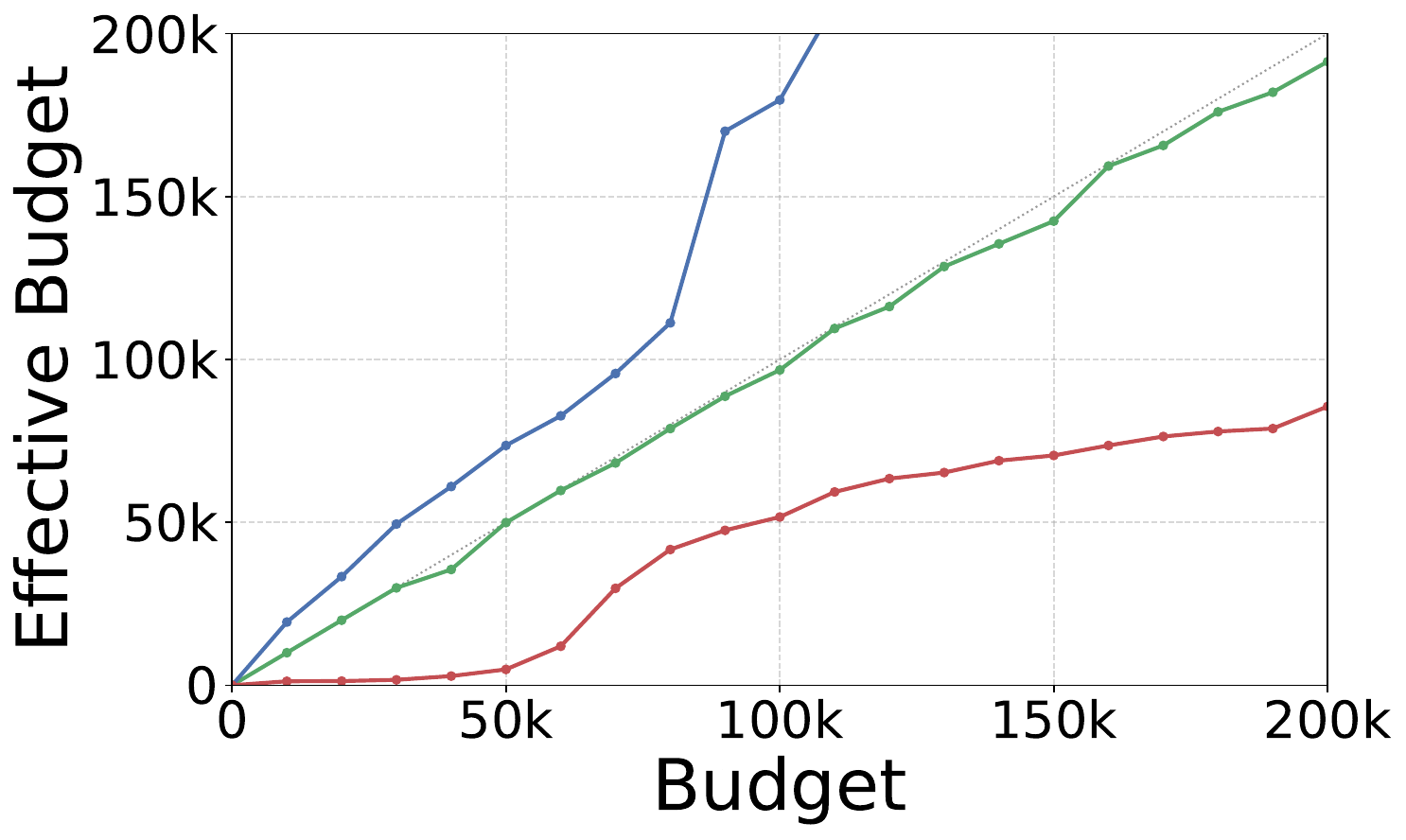}
    \subcaption{Bench 2 ($\Delta_{\text{top2}}=0.02$)}
\end{subfigure}
\begin{subfigure}[t]{0.45\textwidth}
    \centering
    \includegraphics[width=\textwidth]{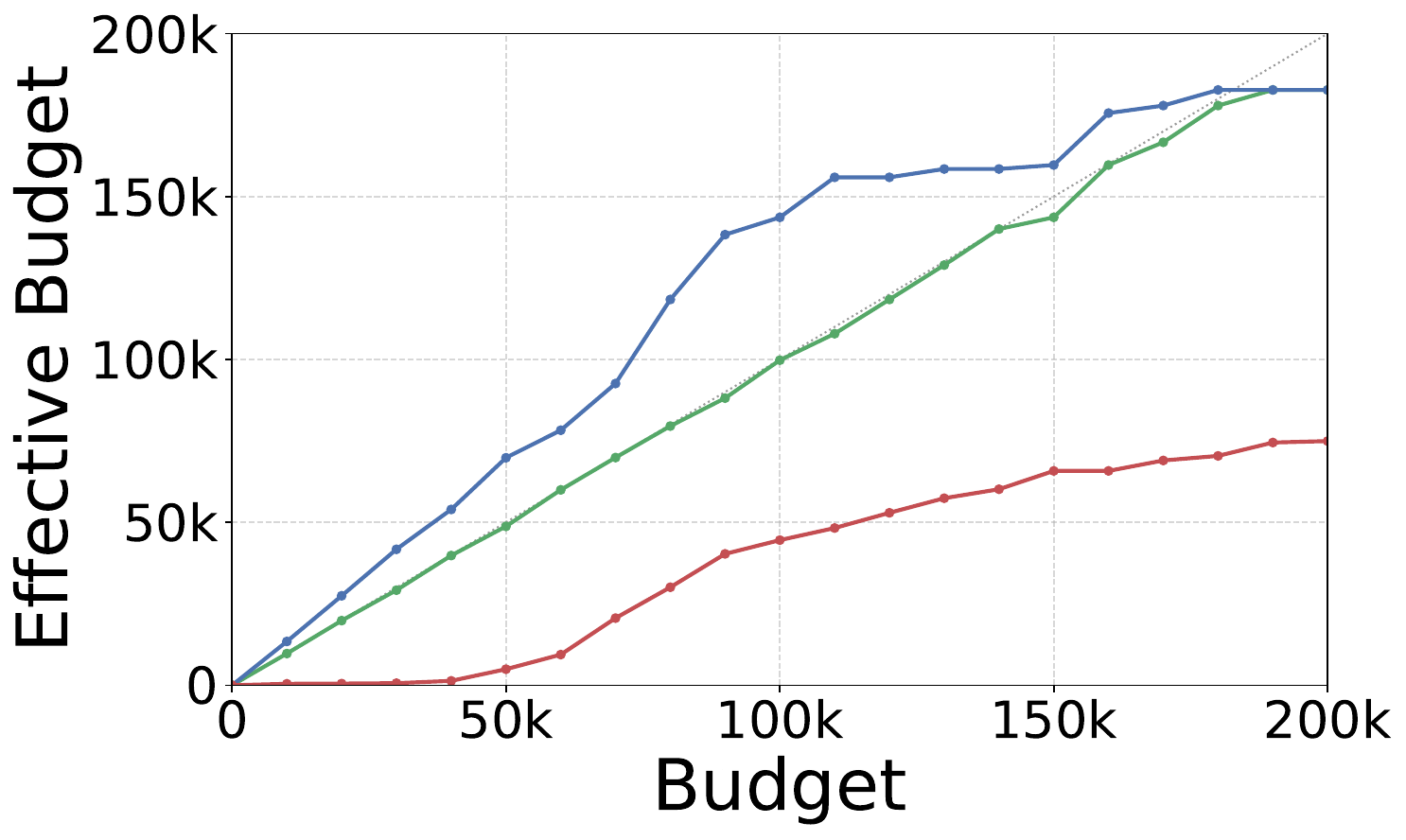}
    \subcaption{Bench 2 ($\Delta_{\text{top2}}=0.03$)}
\end{subfigure}
\caption{Effective Budget}
\label{fig:effective_budget_mmlu}
\end{figure*}

\begin{figure*}
\centering
\begin{subfigure}[t]{0.6\textwidth}
    \centering
\includegraphics[width=\textwidth]{res_bbh_gpqa/legend.pdf}
\end{subfigure}
\\
\begin{subfigure}[t]{0.45\textwidth}

    \centering
    \includegraphics[width=\textwidth]{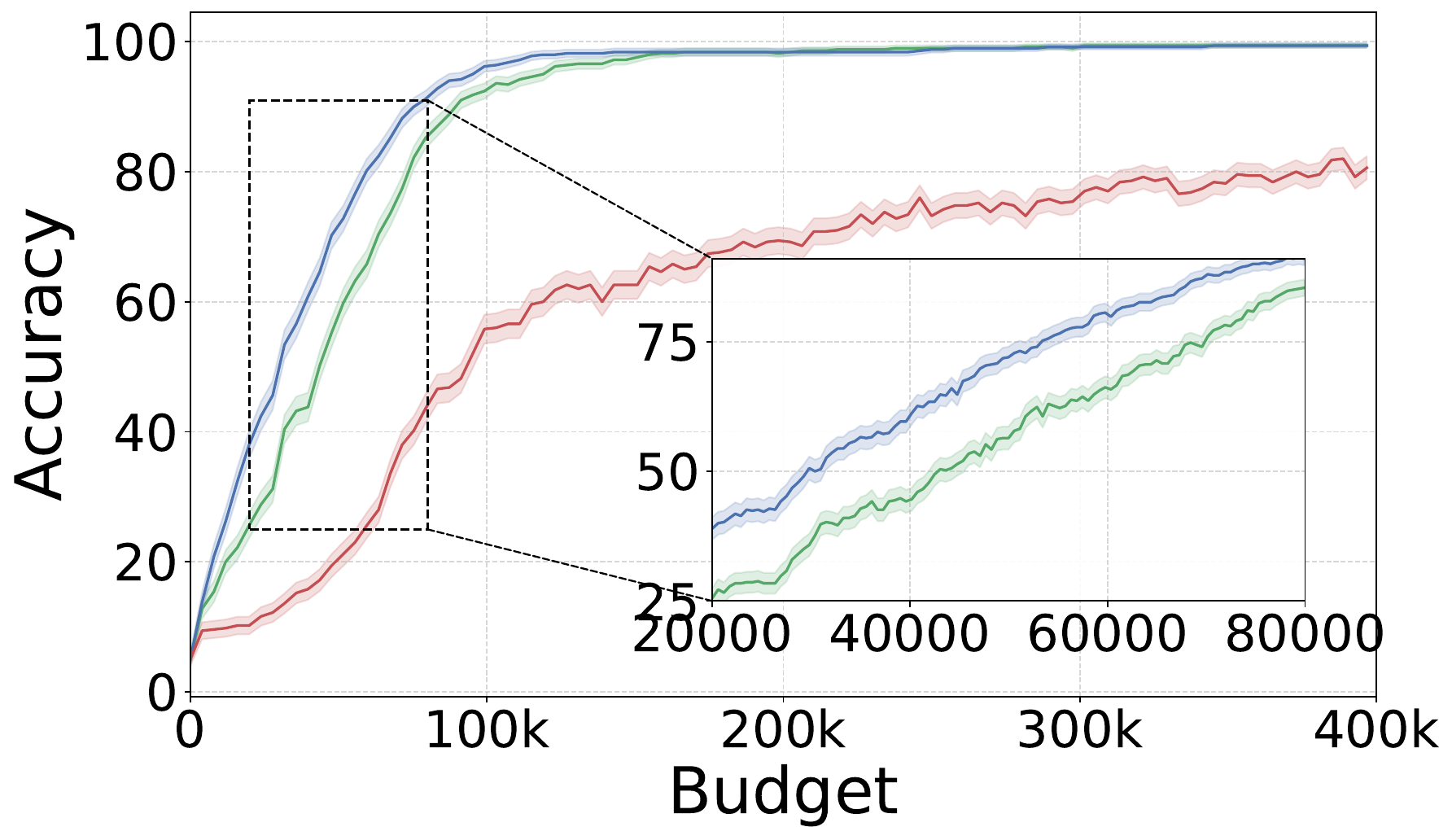}
    \subcaption{Bench 1($\Delta_{\text{top2}}= 0.01$)}
\end{subfigure}
\begin{subfigure}[t]{0.45\textwidth}
    \centering
    \includegraphics[width=\textwidth]{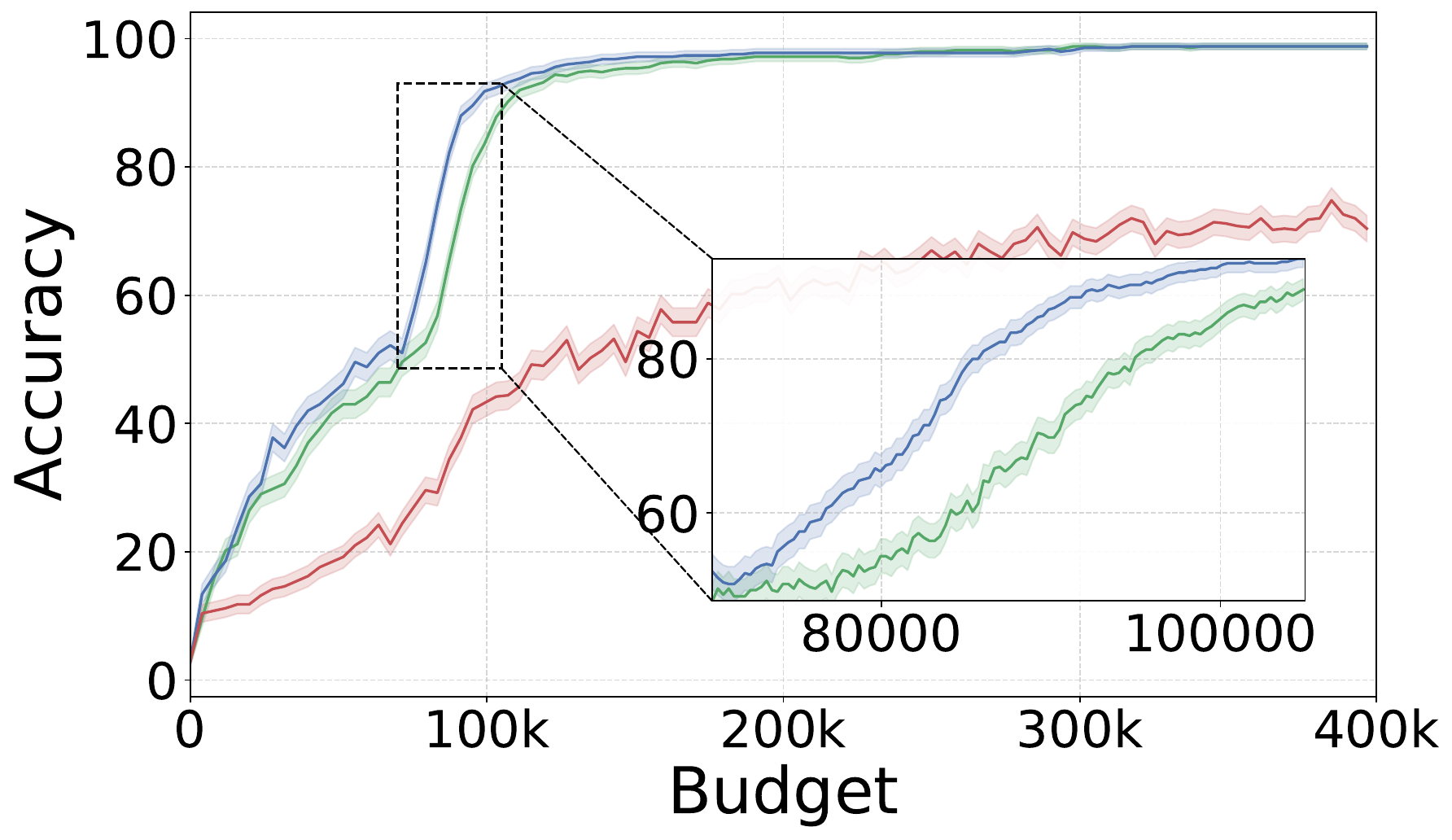}
    \subcaption{Bench 1(Full dataset)}
\end{subfigure}
\\
\begin{subfigure}[t]{0.45\textwidth}
    \centering
    \includegraphics[width=\textwidth]{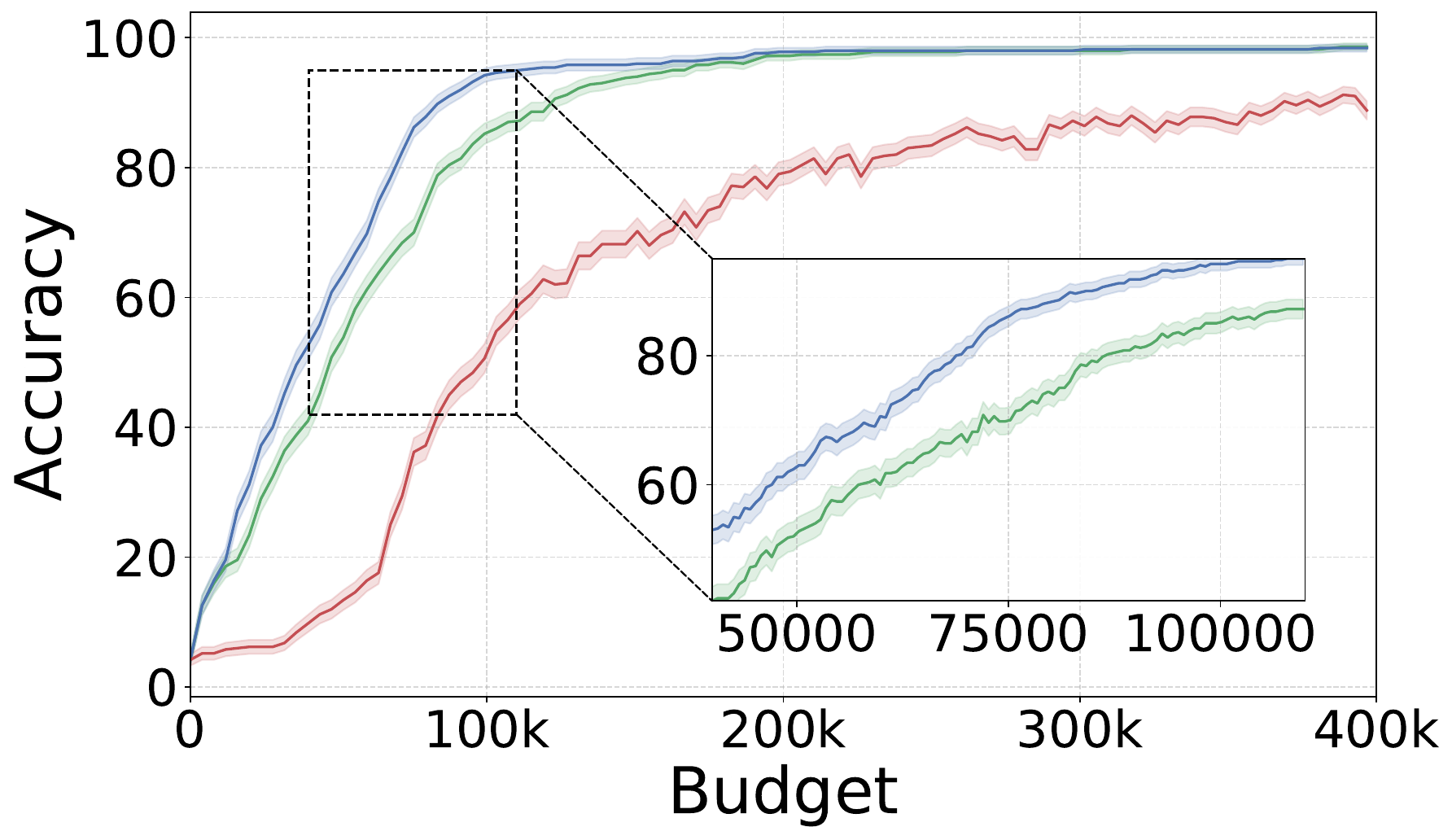}
    \subcaption{Bench 2($\Delta_{\text{top2}}= 0.01$)}
\end{subfigure}
\begin{subfigure}[t]{0.45\textwidth}
    \centering
    \includegraphics[width=\textwidth]{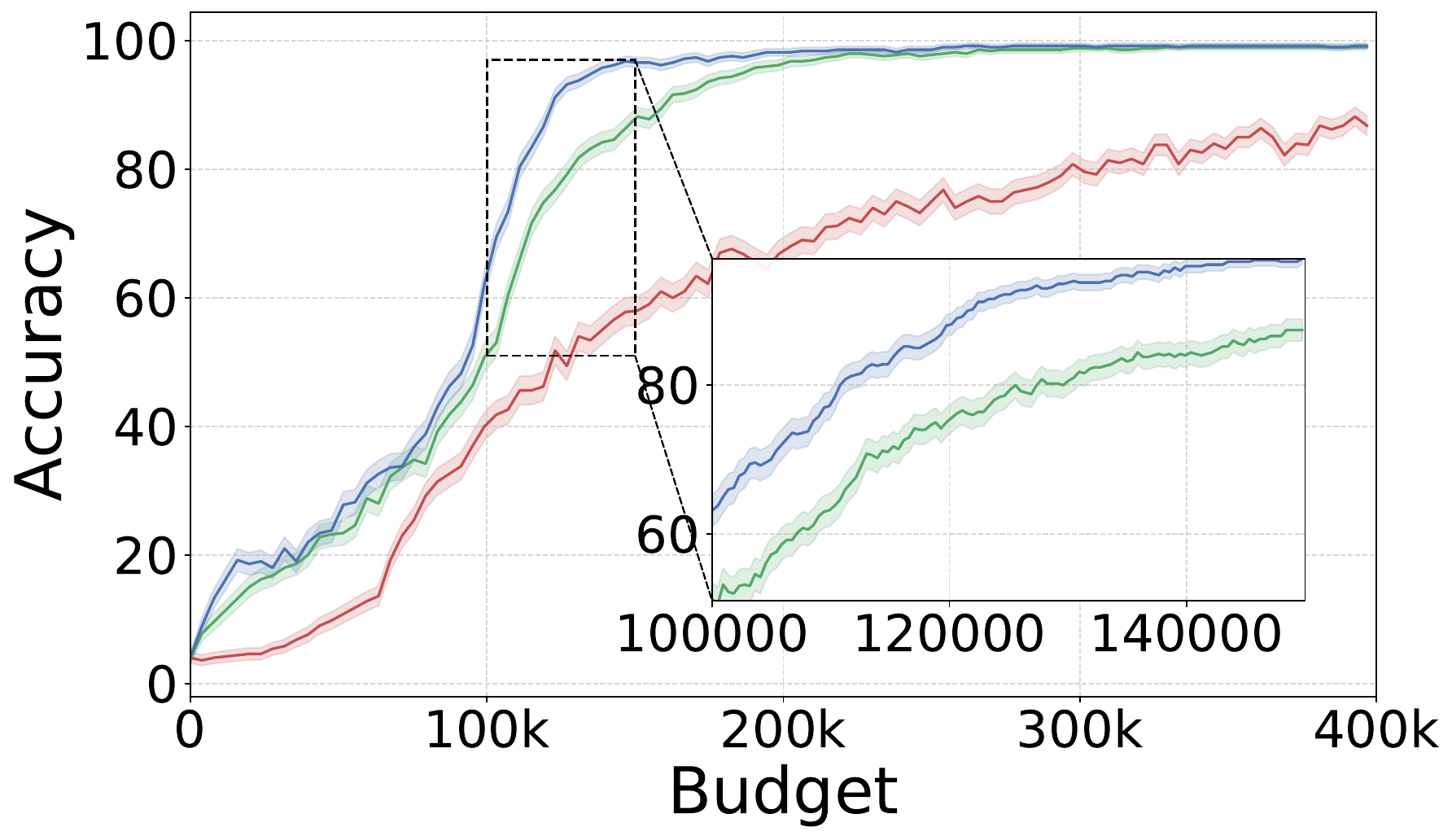}
    \subcaption{Bench 2(Full dataset)}
\end{subfigure}
\caption{Accuracy versus budget. Shaded areas around each curve depict standard error.}
\label{fig:accuracy_vs_budget_appendix}
\end{figure*}

\section{Experiment Details}
\label{appendix:experiments}
The dataset we use is taken from~\cite{wu2026efficient} and was downloaded from the paper's GitHub repository\footnote{\url{https://github.com/skbwu/efficiently-evaluating-llms/tree/main/data/processed}} at commit \texttt{857ee18}. The dataset is licensed under the Apache 2.0 license.

\subsection{Hyperparameter}
To fit the low-rank factorization model, we used the AdamW optimizer.
 We used the following parameters in our experiments:
 
 \paragraph{Bench 1.} When fitting the low-rank model on the training data we use: $r=100$, $\lambda = 0.001$, learning-rate=$0.01$, and $5$ epochs. For the initialization phase we use: $\lambda=0.01$, learning-rate=$0.01$,epochs=$50$. During the bandit loop we use learning-rate=$0.1$, $\lambda=0.01$, epochs=$10$.

 \paragraph{Bench 2.} When fitting the low-rank model on the training data we use: $r=100$, $\lambda = 0.01$, learning-rate=$0.01$, and $5$ epochs. For the initialization phase we use: $\lambda=0.01$, learning-rate=$0.01$,epochs=$50$. During the bandit loop we use learning-rate=$0.1$, $\lambda=0.01$, epochs=$10$.

All methods used $a=1$ to run the algorithm.

\subsection{Experiments compute resources}
Experiments were conducted on cpus and GPUs. We use Nvidia A40. A single run of our algorithm with budget $540,800$ took 60 seconds on average.

\clearpage
\newpage

\newpage
\end{document}